\definecolor{darkgreen}{rgb}{0.125,0.5,0.169}
\def\gridwidth{2}
\def\nodenum{4}
\def\asnmargin{0.05}
\begin{document}

\title{Distributed Stochastic Optimization of the Regularized Risk}

\author{ 
  \alignauthor 
  Shin Matsushima\titlenote{Part of the research was performed when
    visiting Purdue University and UCSC.}\\
  \affaddr{University of Tokyo}\\
  \affaddr{Tokyo, Japan}\\
  \email{shin\_matsushima@mist.i.u-tokyo.ac.jp}
  \alignauthor
  Hyokun Yun\\
  \affaddr{Amazon Inc.}\\
  \affaddr{Seattle, WA}\\
  \email{yunhyoku@amazon.com}
  \alignauthor
  Xinhua Zhang\\
  \affaddr{NICTA}\\
  \affaddr{Canberra, Australia}\\
  \email{xinhua.zhang@nicta.com.au}
  \alignauthor
  S.V.\,N. Vishwanathan\\
  \affaddr{University of California}\\
  \affaddr{Santa Cruz, CA}\\
  \email{vishy@ucsc.edu}
}
\date{\today}

\maketitle
\nipsfinalcopy

\begin{abstract}
  Many machine learning algorithms minimize a regularized risk, and
  stochastic optimization is widely used for this task. When working
  with massive data, it is desirable to perform stochastic optimization
  in parallel. Unfortunately, many existing stochastic optimization
  algorithms cannot be parallelized efficiently. In this paper we show
  that one can rewrite the regularized risk minimization problem as an
  equivalent saddle-point problem, and propose an efficient distributed
  stochastic optimization (DSO) algorithm.  We prove the algorithm's
  rate of convergence; remarkably, our analysis shows that the algorithm
  scales almost linearly with the number of processors.  We also verify
  with empirical evaluations that the proposed algorithm is competitive
  with other parallel, general purpose stochastic and batch optimization
  algorithms for regularized risk minimization.
\end{abstract}

\section{Introduction}
\label{sec:Introduction}

Regularized risk minimization is a well-known paradigm in machine
learning:
\begin{align}
  \label{eq:rrm}
  \min_{\wb}
  P\rbr{\wb} := \lambda \sum_{j} \phi_{j}\rbr{w_{j}} + \frac{1}{m}
  \sum_{i=1}^{m} \ell \rbr{\inner{\wb}{\xb_{i}}, y_{i}}.
\end{align}
Here, we are given $m$ training data points $\xb_{i} \in \RR^{d}$ and
their corresponding labels $y_{i}$, while $\wb \in \RR^{d}$ is the
parameter of the model. Furthermore, $w_{j}$ denotes the $j$-th
component of $\wb$, while $\phi_{j}\rbr{\cdot}$ is a convex function
which penalizes complex models. $\ell\rbr{\cdot, \cdot}$ is a loss
function, which is convex in $\wb$. Moreover, $\inner{\cdot}{\cdot}$
denotes the Euclidean inner product, and $\lambda > 0$ is a scalar which
trades-off between the average loss and the regularizer. For brevity, we
will use $\ell_{i}\rbr{\inner{\wb}{\xb_{i}}}$ to denote
$\ell\rbr{\inner{\wb}{\xb_{i}}, y_{i}}$.

Many well-known models can be derived by specializing
\eqref{eq:rrm}. For instance, if $y_{i} \in \cbr{\pm 1}$, then 
setting $\phi_{j}(w_{j}) = \half w_{j}^{2}$ and
$\ell_{i}\rbr{\inner{\wb}{\xb_{i}}} = \max\rbr{0, 1 - y_{i}
  \inner{\wb}{\xb_{i}}}$ recovers binary linear support vector machines
(SVMs) \citep{SchSmo02}. On the other hand, using the same regularizer
but changing the loss function to $\ell_{i}\rbr{\inner{\wb}{\xb_{i}}} =
\log \rbr{1 + \exp \rbr{-y_{i} \inner{\wb}{\xb_{i}}}}$ yields
regularized logistic regression \citep{HasTibFri09}. Similarly,
setting $\phi_{j}\rbr{w_{j}}=\abr{w_{j}}$ leads to sparse learning
such as LASSO \citep{HasTibFri09} with
$\ell_{i}\rbr{\inner{\wb}{\xb_{i}}}=\frac{1}{2} \rbr{y_i
  -\inner{\wb}{\xb_{i}}}^{2}$. 

A number of specialized as well as general purpose algorithms have been
proposed for minimizing the regularized risk. For instance, if both the
loss and the regularizer are smooth, as is the case with logistic
regression, then quasi-Newton algorithms such as L-BFGS \citep{LiuNoc89}
have been found to be very successful. On the other hand, for smooth
regularizers but non-smooth loss functions, \citet{TeoVisSmoLe10}
proposed a bundle method for regularized risk minimization
(BMRM). Another popular first-order solver is Alternating Direction
Method of Multipliers (ADMM) \cite{BoyParChuPelEtal10}. These optimizers
belong to the broad class of batch minimization algorithms; that is, in
order to perform a parameter update, at every iteration they compute the
regularized risk $P(\wb)$ as well as its gradient
\begin{align}
  \label{eq:primal_gradient}
  \nabla P\rbr{\wb} = \lambda \sum_{j=1}^{d} \nabla \phi_{j}\rbr{w_{j}}
  \cdot \eb_{j} + \frac{1}{m} \sum_{i=1}^{m} \nabla \ell_{i}
  \rbr{\inner{\wb}{\xb_{i}}} \cdot \xb_{i},
\end{align}
where $\eb_{j}$ denotes the $j$-th standard basis vector.  Both
$P\rbr{\wb}$ as well as the gradient $\nabla P\rbr{\wb}$ take $O(md)$
time to compute, which is computationally expensive when $m$, the
number of data points, is large. Batch algorithms can be efficiently
parallelized, however, by exploiting the fact that the empirical risk
$\frac{1}{m} \sum_{i=1}^{m} \ell_{i} \rbr{\inner{\wb}{\xb_{i}}}$ as
well as its gradient $\frac{1}{m} \sum_{i=1}^{m} \nabla \ell_{i}
\rbr{\inner{\wb}{\xb_{i}}} \cdot \xb_{i}$ decompose over the data
points, and therefore one can compute $P\rbr{\wb}$ and $\nabla
P\rbr{\wb}$ in a distributed fashion~\cite{ChuKimLinYuetal06}.

Batch algorithms, unfortunately, are known to be unfavorable for large
scale machine learning both empirically and theoretically
\citep{BotBou11}. It is now widely accepted that stochastic algorithms
which process one data point at a time are more effective for
regularized risk minimization. In a nutshell, the idea here is that
\eqref{eq:primal_gradient} can be stochastically approximated by
\begin{align}
  \label{eq:stochastic_gradient}
  \gb_{i} = \lambda \sum_{j=1}^{d} \nabla \phi_{j}\rbr{w_{j}}
  \cdot \eb_{j} + \nabla \ell_{i} \rbr{\inner{\wb}{\xb_{i}}} \cdot
  \xb_{i},
\end{align}
when $i$ is chosen uniformly random in $\cbr{1,\ldots,m}$. Note that
$\gb_{i}$ is an unbiased estimator of the true gradient $\nabla
P\rbr{\wb}$; that is, $\EE_{i\in\cbr{1,\ldots, m}} \sbr{\gb_{i}} =
\nabla P\rbr{\wb}$.  Now we can replace the true gradient by this
\emph{stochastic} gradient to approximate a gradient descent update as
\begin{align}
  \label{eq:sgd-update}
  \wb \leftarrow \wb - \eta \cdot \gb_{i},
\end{align}
where $\eta$ is a step size parameter.  Computing $\gb_i$ only takes
$O(d)$ effort, which is independent of $m$, the number of data
points. \citet{BotBou11} show that stochastic optimization is
asymptotically faster than gradient descent and other second-order
batch methods such as L-BFGS for regularized risk minimization.

However, a drawback of update \eqref{eq:sgd-update} is that it is not
easy to parallelize anymore.  Usually, the computation of $\gb_{i}$ in
\eqref{eq:stochastic_gradient} is a very lightweight operation for
which parallel speed-up can rarely be expected.  On the other hand,
one cannot execute multiple updates of \eqref{eq:sgd-update}
simultaneously, since computing $\gb_{i}$ requires \emph{reading} the
latest value of $\wb$, while updating \eqref{eq:sgd-update} requires
\emph{writing} to the components of $\wb$.  The problem is even more
severe in distributed memory systems, where the cost of communication
between processors is significant.

Existing parallel stochastic optimization algorithms try to work
around these difficulties in a somewhat ad-hoc manner (see
Section~\ref{sec:RelatedWork}). In this paper, we take a fundamentally
different approach and propose a reformulation of the regularized risk
\eqref{eq:rrm}, for which one can \emph{naturally} derive a parallel
stochastic optimization algorithm. Our technical contributions are:
\begin{itemize}
\item We reformulate regularized risk minimization as an equivalent
  saddle-point problem, and show that it can be solved via a new
  distributed stochastic optimization (DSO) algorithm.
\item We prove $O\rbr{1/\sqrt{T}}$ rates of convergence for DSO, and
  show that it scales almost linearly with the number of processors.
\item We verify with empirical evaluations that when used for training
  linear support vector machines (SVMs) or binary logistic regression
  models, DSO is comparable to general-purpose stochastic (\eg,
  \citet{ZinSmoWeiLi10}) or batch (\eg, \citet{TeoVisSmoLe10})
  optimizers.
\end{itemize}


\section{Reformulating Regularized Risk Minimization}
\label{sec:ReformRegulRisk}

We begin by reformulating the regularized risk minimization problem as
an equivalent saddle-point problem.  Towards this end, we first rewrite
\eqref{eq:rrm} by introducing an auxiliary variable $u_i$ for each
$\xb_{i}$:
\begin{align}
  \label{eq:with-u}
  \min_{\wb, \ub} \; \; \lambda \sum_{j=1}^{d} \phi_{j}\rbr{w_{j}} +
  \frac{1}{m} \sum_{i=1}^{m} \ell_{i}\rbr{u_i} 
  \mathrm{ s.t. } \; \; u_i=\inner{\wb}{\xb_i} \quad \forall \, \, i=1,\ldots,m.
\end{align}  
By introducing Lagrange multipliers $\alpha_i$ to eliminate the constraints,
we obtain
\begin{align*}
  \min_{\wb, \ub} \max_{\alphab} \lambda \sum_{j=1}^{d}
  \phi_{j}\rbr{w_{j}} + \frac{1}{m} \sum_{i=1}^{m} \ell_{i}\rbr{u_i} +
  \frac{1}{m}\sum_{i=1}^m \alpha_i( u_i -\inner{\wb}{\xb_i}).
\end{align*}
Here $\ub$ denotes a vector whose components are $u_{i}$. Likewise,
$\alphab$ is a vector whose components are $\alpha_{i}$. Since the
objective function \eqref{eq:with-u} is convex and the constraints are
linear, strong duality applies \citep{BoyVan04}. Therefore, we can
switch the maximization over $\alphab$ and the minimization over $\wb, \ub$.
Note that $\min_{u_i} \alpha_i u_i + \ell_{i}(u_i)$ can be written
$-\ell_{i}^{\star} (-\alpha_{i})$, where $\ell_i^{\star}(\cdot)$ is
the Fenchel-Legendre conjugate of $\ell_{i}(\cdot)$ \citep{BoyVan04}.
The above transformations yield to our formulation:
\begin{align*}
  \max_{\alphab} \min_{\wb} f\rbr{\wb, \alphab} := \lambda
  \sum_{j=1}^{d} \phi_{j}\rbr{w_{j}} - \frac{1}{m} \sum_{i=1}^{m}
  \alpha_{i} \inner{\wb}{\xb_{i}} -\frac{1}{m} \sum_{i=1}^{m}
  \ell^{\star}_{i}\rbr{-\alpha_{i}}.
\end{align*}
If we analytically minimize $f(\wb, \alphab)$ in terms of $\wb$ to
eliminate it, then we obtain so-called \emph{dual} objective which is
only a function of $\alphab$. Moreover, any $\wb^{*}$ which is a
solution of the primal problem \eqref{eq:rrm}, and any $\alphab^{*}$
which is a solution of the dual problem, is a saddle point of
$f\rbr{\wb, \alphab}$ \citep{BoyVan04}. In other words, minimizing the
primal, maximizing the dual, and finding a saddle point of $f\rbr{\wb,
  \alphab}$ are all equivalent problems.




\subsection{Stochastic Optimization}
\label{sec:StochOptim}

Let $x_{ij}$ denote the $j$-th coordinate of $\xb_{i}$, and
$\Omega_{i} := \cbr{j: x_{ij} \neq 0}$ denote the non-zero coordinates
of $\xb_{i}$. Similarly, let $\Omegabar_{j} := \cbr{i: x_{ij} \neq 0}$
denote the set of data points where the $j$-th coordinate is non-zero
and $\Omega := \cbr{\rbr{i,j}: x_{ij} \neq 0}$ denotes the set of all
non-zero coordinates in the training dataset $\xb_{1}, \ldots,
\xb_{m}$. Then, $f(\wb, \alphab)$ can be rewritten as
\begin{align}
  f\rbr{\wb, \alphab} = \sum_{\rbr{i,j} \in \Omega}
    \frac{\lambda \phi_j\rbr{w_j}}{\abr{\Omegabar_j}}
  - \frac{\ell^{\star}_i(-\alpha_i)}{m\abr{\Omega_i}}
  - \frac{\alpha_i w_jx_{ij}}{m}
  = \sum_{\rbr{i,j} \in \Omega} f_{i,j}\rbr{w_{j}, \alpha_{i}},
  \label{eq:sparse_form}
\end{align}
where $|\cdot|$ denotes the cardinality of a set.  Remarkably, each
component $f_{i,j}$ in the above summation depends only on one component
$w_{j}$ of $\wb$ and one component $\alpha_{i}$ of $\alphab$. This
allows us to derive an optimization algorithm which is stochastic in
terms of both $i$ and $j$.
Let us define
\begin{align}
  \gb_{i,j} := \Bigg( \abr{\Omega} \rbr{ \frac{\lambda \nabla\phi_{j}
        \rbr{w_j}}{\abr{\Omegabar_{j}}} - \frac{\alpha_{i} x_{ij}}{m} }
    \eb_{j},
      \abr{\Omega} \rbr{\frac{\nabla\ell^{\star}_{i}(-\alpha_{i})}{ m
        \abr{\Omega_{i}}} - \frac{w_{j} x_{ij}}{m} } \eb_{i} \Bigg).
\end{align}
Under the uniform distribution over $\rbr{i, j} \in \Omega$, one can
easily see that $\gb_{i,j}$ is an unbiased estimate of the gradient of
$f\rbr{\wb, \alphab}$, that is, $\EE_{\cbr{\rbr{i,j} \in \Omega}}
\sbr{\gb_{i,j}} = \rbr{\nabla_{\wb} f\rbr{\wb, \alphab},
  -\nabla_{\alphab} {-f}\rbr{\wb, \alphab}}$.  Since we are
interested in finding a saddle point of $f\rbr{\wb, \alphab}$, our
stochastic optimization algorithm uses the stochastic gradient
$\gb_{i,j}$ to take a \emph{descent} step in $\wb$ and an
\emph{ascent} step in $\alphab$ \citep{NemJudLanSha09}:
\begin{align}
  w_j \leftarrow w_j - \eta \cdot     \rbr{
    \frac{\lambda  \nabla\phi_j\rbr{w_j}}{\abr{\Omegabar_j}} 
    - \frac{\alpha_i x_{ij}}{m}
  },   
  \text{  and  }
  \alpha_i  \leftarrow \alpha_i + \eta \cdot \rbr{
    \frac{ \nabla\ell^{\star}_i(-\alpha_i)}{m\abr{\Omega_i}}
    - \frac{w_j x_{ij}}{m}
  }. 
  \label{eq:saddle_grad}
\end{align}
Surprisingly, the time complexity of update \eqref{eq:saddle_grad} is
independent of the size of data; it is $O(1)$.  Compare this with the
$O(md)$ complexity of batch update and $O(d)$ complexity of regular
stochastic gradient descent.

Note that in the above discussion, we implicitly assumed that
$\phi_{j}\rbr{\cdot}$ and $\ell_{i}^{\star}\rbr{\cdot}$ are
differentiable. If that is not the case, then their derivatives can be
replaced by sub-gradients \citep{BoyVan04}. Therefore this approach can
deal with wide range of regularized risk minimization problem.

\section{Parallelization}
\label{sec:Parallelization}

The minimax formulation \eqref{eq:sparse_form} not only admits an
efficient stochastic optimization algorithm, but also allows us to
derive a distributed stochastic optimization (DSO) algorithm. The key
observation underlying DSO is the following: Given $\rbr{i,j} $ and
$\rbr{i', j'}$ both in $\Omega$, if $i \neq i'$ and $j \neq j'$ then one
can simultaneously perform update \eqref{eq:saddle_grad} on
$(w_{j}, \alpha_{i})$ and $(w_{j'}, \alpha_{i'})$. In other words, the
updates to $w_{j}$ and $\alpha_{i}$ are independent of the updates to
$w_{j'}$ and $\alpha_{i'}$, as long as $i \neq i'$ and $j \neq j'$.

Before we formally describe DSO we would like to present some intuition
using Figure~\ref{fig:dsgd_scheme}.  Here we assume that we have 4
processors. The data matrix $X$ is an $m \times d$ matrix formed by
stacking $\xb_{i}^{\top}$ for $i=1,\ldots,m$, while $\wb$ and $\alphab$
denote the parameters to be optimized. The non-zero entries of $X$ are
marked by an $\mathrm{x}$ in the figure. Initially, both parameters as
well as rows of the data matrix are partitioned across processors as
depicted in Figure~\ref{fig:dsgd_scheme} (left); colors in the figure
denote ownership \eg, the first processor owns a fraction of the data
matrix and a fraction of the parameters $\alphab$ and $\wb$ (denoted as
$\wb^{\rbr{1}}$ and $\alphab^{\rbr{1}}$) shaded with red.  Each
processor samples a non-zero entry $x_{ij}$ of $X$ within the dark
shaded rectangular region (active area) depicted in the figure, and
updates the corresponding $w_{j}$ and $\alpha_{i}$. After performing
updates, the processors stop and exchange coordinates of $\wb$. This
defines an \emph{inner iteration}. After each inner iteration, ownership
of the $\wb$ variables and hence the active area change, as shown in
Figure~\ref{fig:dsgd_scheme} (right). If there are $p$ processors, then
$p$ inner iterations define an \emph{epoch}. Each coordinate of $\wb$ is
updated by each processor at least once in an epoch. The algorithm
iterates over epochs until convergence.

Four points are worth noting. First, since the active area of each
processor does not share either row or column coordinates with the
active area of other processors, as per our key observation above, the
updates can be carried out by each processor in parallel without any
need for intermediate communication with other processors.  Second, we
partition and distribute the data only once. The coordinates of
$\alphab$ are partitioned at the beginning and are not exchanged by the
processors; only coordinates of $\wb$ are exchanged. This means that the
cost of communication is independent of $m$, the number of data
points. Third, our algorithm can work in both shared memory, distributed
memory, and hybrid (multiple threads on multiple machines)
architectures.  Fourth, the $\wb$ parameter is distributed across
multiple machines and there is no redundant storage, which makes the
algorithm scale linearly in terms of space complexity.  Compare this
with the fact that most parallel optimization algorithms require each
local machine to hold a copy of $\wb$.

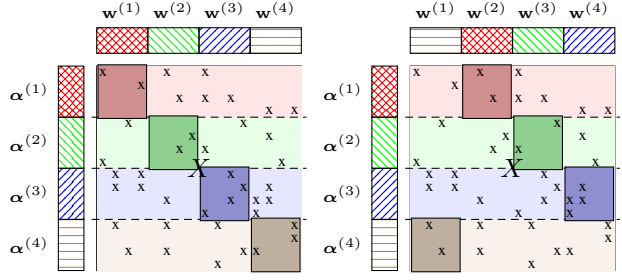
\begin{figure}
  \centering
  \begin{tikzpicture}[scale=0.34]
    
  \draw (0,0) rectangle (\gridwidth * \nodenum, \gridwidth * \nodenum);
  
    \fill [red!10] (0, 3 * \gridwidth) 
    rectangle (\gridwidth * \nodenum, 4 * \gridwidth);

    \fill [green!10] (0, 2 * \gridwidth) 
    rectangle (\gridwidth * \nodenum, 3 * \gridwidth);

    \fill [blue!10] (0, 1 * \gridwidth) 
    rectangle (\gridwidth * \nodenum, 2 * \gridwidth);

    \fill [brown!10] (0,0) 
    rectangle (\gridwidth * \nodenum, 1 * \gridwidth);

  \foreach \y in {1,2,3}
  {
    \draw[densely dashed] 
    (-\gridwidth * 0.1, \gridwidth * \y)
    --
    (\gridwidth * \nodenum + \gridwidth * 0.1,  \gridwidth * \y);
  }

    \drawrowpart
    
    \draw [above] (\gridwidth * 0.5, 4.75 * \gridwidth)
    node {\tiny{$\wb^{(1)}$}};
    \draw [above] (\gridwidth * 1.5, 4.75 * \gridwidth)
    node {\tiny{$\wb^{(2)}$}};
    \draw [above] (\gridwidth * 2.5, 4.75 * \gridwidth)
    node {\tiny{$\wb^{(3)}$}};
    \draw [above] (\gridwidth * 3.5, 4.75 * \gridwidth)
    node {\tiny{$\wb^{(4)}$}};

    \drawcolpart{0}{3}{\ptrna}{\ptcla}
    \drawcolpart{4}{7}{\ptrnb}{\ptclb}
    \drawcolpart{8}{11}{\ptrnc}{\ptclc}
    \drawcolpart{12}{15}{\ptrnd}{\ptcld}

    \fill [red!25!lightgray]
    (0 * \gridwidth + \asnmargin, 
    \gridwidth * 4 + \asnmargin)
    rectangle
    (1 * \gridwidth - \asnmargin, 
    \gridwidth * 3 - \asnmargin);    
    \draw
    (0 * \gridwidth + \asnmargin, 
    \gridwidth * 4 + \asnmargin)
    rectangle
    (1 * \gridwidth - \asnmargin, 
    \gridwidth * 3 - \asnmargin);

    \fill [green!25!lightgray]
    (1 * \gridwidth + \asnmargin, 
    \gridwidth * 3 + \asnmargin)
    rectangle
    (2 * \gridwidth - \asnmargin, 
    \gridwidth * 2 - \asnmargin);    
    \draw
    (1 * \gridwidth + \asnmargin, 
    \gridwidth * 3 + \asnmargin)
    rectangle
    (2 * \gridwidth - \asnmargin, 
    \gridwidth * 2 - \asnmargin);
    
    \fill [blue!25!lightgray]
    (2 * \gridwidth + \asnmargin, 
    \gridwidth * 2 + \asnmargin)
    rectangle
    (3 * \gridwidth - \asnmargin, 
    \gridwidth * 1 - \asnmargin);
    \draw
    (2 * \gridwidth + \asnmargin, 
    \gridwidth * 2 + \asnmargin)
    rectangle
    (3 * \gridwidth - \asnmargin, 
    \gridwidth * 1 - \asnmargin);
    
    \fill [brown!25!lightgray]
    (3 * \gridwidth + \asnmargin, 
    \gridwidth * 1 + \asnmargin)
    rectangle
    (4 * \gridwidth - \asnmargin, 
    \gridwidth * 0 - \asnmargin);
    \draw
    (3 * \gridwidth + \asnmargin, 
    \gridwidth * 1 + \asnmargin)
    rectangle
    (4 * \gridwidth - \asnmargin, 
    \gridwidth * 0 - \asnmargin);
    
    \dtpt{5}{1}  \dtpt{0}{15}  \dtpt{13}{12}  \dtpt{13}{6}  \dtpt{14}{8}  \dtpt{0}{8}  \dtpt{7}{10}  \dtpt{15}{2}  \dtpt{10}{5}  \dtpt{8}{13}  \dtpt{5}{3}  \dtpt{12}{5}  \dtpt{10}{7}  \dtpt{11}{6}  \dtpt{15}{3}  \dtpt{6}{9}  \dtpt{13}{5}  \dtpt{1}{3}  \dtpt{8}{15}  \dtpt{1}{6}  \dtpt{1}{7}  \dtpt{8}{9}  \dtpt{12}{4}  \dtpt{12}{1}  \dtpt{13}{10}  \dtpt{9}{0}  \dtpt{8}{4}  \dtpt{15}{12}  \dtpt{3}{6}  \dtpt{10}{13} 
    \dtpt{6}{13} 
    \dtpt{2}{11}
    \dtpt{5}{5}
    \dtpt{2}{1}
    \dtpt{3}{7}
    \dtpt{9}{3}
    \dtpt{11}{1}
    \dtpt{3}{14}
    \dtpt{11}{11}
    \dtpt{5}{15}

    \draw
    (2 * \gridwidth, 
    \gridwidth * 2)
    node {$X$};
  \end{tikzpicture}
  \begin{tikzpicture}[scale=0.34]

  \draw (0,0) rectangle (\gridwidth * \nodenum, \gridwidth * \nodenum);
  
    \fill [red!10] (0, 3 * \gridwidth) 
    rectangle (\gridwidth * \nodenum, 4 * \gridwidth);

    \fill [green!10] (0, 2 * \gridwidth) 
    rectangle (\gridwidth * \nodenum, 3 * \gridwidth);

    \fill [blue!10] (0, 1 * \gridwidth) 
    rectangle (\gridwidth * \nodenum, 2 * \gridwidth);

    \fill [brown!10] (0,0) 
    rectangle (\gridwidth * \nodenum, 1 * \gridwidth);

  \foreach \y in {1,2,3}
  {
    \draw[densely dashed] 
    (-\gridwidth * 0.1, \gridwidth * \y)
    --
    (\gridwidth * \nodenum + \gridwidth * 0.1,  \gridwidth * \y);
  }
  
    \drawrowpart

    \draw [above] (\gridwidth * 0.5, 4.75 * \gridwidth)
    node {\tiny{$\wb^{(1)}$}};
    \draw [above] (\gridwidth * 1.5, 4.75 * \gridwidth)
    node {\tiny{$\wb^{(2)}$}};
    \draw [above] (\gridwidth * 2.5, 4.75 * \gridwidth)
    node {\tiny{$\wb^{(3)}$}};
    \draw [above] (\gridwidth * 3.5, 4.75 * \gridwidth)
    node {\tiny{$\wb^{(4)}$}};

    \drawcolpart{4}{7}{\ptrna}{\ptcla}
    \drawcolpart{8}{11}{\ptrnb}{\ptclb}
    \drawcolpart{12}{15}{\ptrnc}{\ptclc}
    \drawcolpart{0}{3}{\ptrnd}{\ptcld}

    \fill [brown!25!lightgray]
    (0 * \gridwidth + \asnmargin, 
    \gridwidth * 1 + \asnmargin)
    rectangle
    (1 * \gridwidth - \asnmargin, 
    \gridwidth * 0 - \asnmargin);
    \draw
    (0 * \gridwidth + \asnmargin, 
    \gridwidth * 1 + \asnmargin)
    rectangle
    (1 * \gridwidth - \asnmargin, 
    \gridwidth * 0 - \asnmargin);

    \fill [red!25!lightgray]
    (1 * \gridwidth + \asnmargin, 
    \gridwidth * 4 + \asnmargin)
    rectangle
    (2 * \gridwidth - \asnmargin, 
    \gridwidth * 3 - \asnmargin);    
    \draw
    (1 * \gridwidth + \asnmargin, 
    \gridwidth * 4 + \asnmargin)
    rectangle
    (2 * \gridwidth - \asnmargin, 
    \gridwidth * 3 - \asnmargin);
    
    \fill [green!25!lightgray]
    (2 * \gridwidth + \asnmargin, 
    \gridwidth * 3 + \asnmargin)
    rectangle
    (3 * \gridwidth - \asnmargin, 
    \gridwidth * 2 - \asnmargin);
    \draw
    (2 * \gridwidth + \asnmargin, 
    \gridwidth * 3 + \asnmargin)
    rectangle
    (3 * \gridwidth - \asnmargin, 
    \gridwidth * 2 - \asnmargin);
    
    \fill [blue!25!lightgray]
    (3 * \gridwidth + \asnmargin, 
    \gridwidth * 2 + \asnmargin)
    rectangle
    (4 * \gridwidth - \asnmargin, 
    \gridwidth * 1 - \asnmargin);
    \draw
    (3 * \gridwidth + \asnmargin, 
    \gridwidth * 2 + \asnmargin)
    rectangle
    (4 * \gridwidth - \asnmargin, 
    \gridwidth * 1 - \asnmargin);

    \dtpt{5}{1}  \dtpt{0}{15}  \dtpt{13}{12}  \dtpt{13}{6}  \dtpt{14}{8}  \dtpt{0}{8}  \dtpt{7}{10}  \dtpt{15}{2}  \dtpt{10}{5}  \dtpt{8}{13}  \dtpt{5}{3}  \dtpt{12}{5}  \dtpt{10}{7}  \dtpt{11}{6}  \dtpt{15}{3}  \dtpt{6}{9}  \dtpt{13}{5}  \dtpt{1}{3}  \dtpt{8}{15}  \dtpt{1}{6}  \dtpt{1}{7}  \dtpt{8}{9}  \dtpt{12}{4}  \dtpt{12}{1}  \dtpt{13}{10}  \dtpt{9}{0}  \dtpt{8}{4}  \dtpt{15}{12}  \dtpt{3}{6}  \dtpt{10}{13} 
    \dtpt{6}{13} 
    \dtpt{2}{11}
    \dtpt{5}{5}
    \dtpt{2}{1}
    \dtpt{3}{7}
    \dtpt{9}{3}
    \dtpt{11}{1}
    \dtpt{3}{14}
    \dtpt{11}{11}
    \dtpt{5}{15}

    \draw
    (2 * \gridwidth, 
    \gridwidth * 2)
    node {$X$};

  \end{tikzpicture}
  \caption{Illustration of DSO with 4 processors. The rows of the data
    matrix $X$ as well as the parameters $\wb$ and $\alphab$ are
    partitioned as shown. Colors denote ownerships. The active area of
    each processor is in dark colors. Left: the initial state. Right:
    the state after one bulk synchronization. }
  \label{fig:dsgd_scheme}
\end{figure}

To formally describe DSO, suppose $p$ processors are available, and
let $I_{1}, \ldots, I_{p}$ denote a fixed partition of the set
$\cbr{1, \ldots, m}$ and $J_{1}, \ldots, J_{p}$ denote a fixed
partition of the set $\cbr{1, \ldots, d}$ such that $\abr{I_{q}}
\approx \abr{I_{q'}}$ and $\abr{J_{r}} \approx \abr{J_{r'}}$ for any
$1 \leq q,q',r,r' \leq p$.  We partition the data $\cbr{\xb_{1},
  \ldots, \xb_{m}}$ and the labels $\cbr{y_{1}, \ldots, y_{m}}$ into
$p$ disjoint subsets according to $I_{1}, \ldots, I_{p}$ and
distribute them to $p$ processors.  The parameters $\cbr{\alpha_{1},
  \ldots, \alpha_{m}}$ are partitioned into $p$ disjoint subsets
$\alphab^{(1)}, \ldots, \alphab^{(p)}$ according to $I_{1}, \ldots,
I_{p}$ while $\cbr{w_{1}, \ldots, w_{d}}$ are partitioned into $p$
disjoint subsets $\wb^{(1)}, \ldots, \wb^{(p)}$ according to $J_{1},
\ldots, J_{p}$ and distributed to $p$ processors, respectively. The
partitioning of $\cbr{1,\ldots, m}$ and $\cbr{1, \ldots, d}$ induces a
$p \times p$ partition of $\Omega$:
\begin{align*}
  \Omega^{(q,r)} := \cbr{ (i,j) \in \Omega \;:\; i \in I_q, j \in J_r },
  \;\; q,r \in \cbr{1, \ldots, p}.
\end{align*}
The execution of DSO proceeds in epochs, and each epoch consists of $p$
inner iterations; at the beginning of the $r$-th inner iteration
($r \geq 1$), processor $q$ owns $\wb^{\rbr{\sigma_{r}\rbr{q}}}$ where
$\sigma_{r}\rbr{q} = \cbr{\rbr{q + r - 2} \text{ mod } p} + 1$, and
executes stochastic updates \eqref{eq:saddle_grad} on coordinates in
$\Omega^{\rbr{q,\sigma_{r}\rbr{q}}}$. Since these updates only involve
variables in $\alphab^{\rbr{q}}$ and $\wb^{\rbr{\sigma\rbr{q}}}$, no
communication between processors is required to execute them.  After
every processor has finished its updates,
$\wb^{\rbr{\sigma_{r}\rbr{q}}}$ is sent to machine
$\sigma_{r+1}^{-1}\rbr{\sigma_{r}\rbr{q}}$ and the algorithm moves on to
the $(r+1)$-st inner iteration. Detailed pseudo-code for the DSO
algorithm can be found in Algorithm~\ref{alg:synchronous}.

\renewcommand{\algorithmicrequire}{\textbf{Initialization:}}
\renewcommand{\algorithmicensure}{\textbf{Parallel Iteration:}}

\begin{algorithm}
  \begin{algorithmic}[1]
    \STATE {Each processor $q \in \cbr{1, 2, \ldots, p}$ initializes $\wb^{(q)}$, $\alphab^{(q)}$}

    \STATE $t \leftarrow 1$

    \REPEAT 

    \STATE {$\eta_t \leftarrow \eta_0/\sqrt{t}$}

    \FORALL {$r \in \cbr{1,2,\ldots,p}$}

    \FORALL {\textbf{processors } $q \in \cbr{1,2,\ldots,p}$ \textbf{ in parallel}}

    \FOR {$(i,j) \in \Omega^{(q,\sigma_r(q))}$} 
    \STATE $w_{j} \leftarrow w_{j} - \eta_{t} \cdot \rbr{ \frac{\lambda
         \nabla\phi_j\rbr{w_j}}{\abr{\Omegabar_j}} - \frac{\alpha_{i}x_{ij}}{m}}$
    and $\alpha_{i} \leftarrow \alpha_{i} + \eta_{t} \cdot \rbr{
      \frac{ \nabla\ell^{\star}_i(-\alpha_{i})}{m\abr{\Omega_{i}}} -
      \frac{w_{j} x_{ij}}{m}}$
    
    \ENDFOR

    \STATE {send $\wb^{\rbr{\sigma_r(q)}}$ to machine
      $\sigma_{r+1}^{-1}(\sigma_r(q))$ and receive $\wb^{\rbr{\sigma_{r+1}(q)}}$}
    
    \ENDFOR
    
    \ENDFOR

    \STATE $t \leftarrow t + 1$

    \UNTIL {convergence}
  \end{algorithmic}
  \caption{Distributed stochastic optimization (DSO) for finding saddle
    point of \eqref{eq:sparse_form}}
  \label{alg:synchronous}
\end{algorithm}

\subsection{Convergence Analysis}
\label{sec:ConvergenceProof}
It is known that the stochastic procedure in
section~\ref{sec:StochOptim} is guaranteed to converge to the saddle
point of $f(\wb,\alphab)$ \cite{NemJudLanSha09}.
The main technical difficulty in proving convergence in our case is
because DSO does not sample $\rbr{i,j}$ coordinates uniformly at random
due to its distributed nature. Therefore, first we prove that DSO is
serializable in a certain sense, that is, there exists an ordering of
the updates such that \emph{replaying} them on a single machine would
recover the same solution produced by DSO. We then analyze this serial
algorithm to establish convergence. We believe that this proof technique
is of independent interest, and differs significantly from convergence
analysis for other parallel stochastic algorithms which typically assume
correlation between data points
\citep[e.g.][]{LanSmoZin09,BraKyrBicGue11}.
\newcommand{\maxOmegai}{{\max_i\abs{\Omega_i}}}
\begin{theorem}
  \label{thm:convergence}
  Let $\rbr{\wb^{t}, \alphab^{t}}$ and
  $\rbr{\tilde{\wb}^{t}, \tilde{\alphab}^{t}} :=
  \rbr{\frac{1}{t}\sum_{s=1}^{t} \wb^{s}, \frac{1}{t} \sum_{s=1}^{t}
    \alphab^{s}}$
  denote the parameter values, and the averaged parameter values
  respectively after the $t$-th epoch of
  Algorithm~\ref{alg:synchronous}.  Moreover, assume that
  $\nbr{\wb},\nbr{\alphab},\abr{\nabla\phi_j(w_j)},\abr{\nabla\ell_i^\star\rbr{-\alpha_{i}}},$
  and $\lambda$ are upper bounded by a constant $c>1$. Then, there exists
  a constant $C$, which is dependent only on $c$, such that after $T$
  epochs the duality gap is
  \begin{align}
    \label{eq:thm-rate}
    \varepsilon\rbr{\tilde{\wb}^{T}, \tilde{\alphab}^{T}} := \max_{\alphab'} f\rbr{\tilde{\wb}^{T},
    \alphab'} - \min_{\wb'} f\rbr{\wb', \tilde{\alphab}^{T}} \le
    C \frac{\sqrt{d}}{\sqrt{T}}.
  \end{align}
  On the other hand, if $\phi_j(s) = \half s^2$, $\sqrt{\maxOmegai} < m$
  and $\eta_t < \frac{1}{\lambda}$ hold, then there exists a different
  constant $C'$ dependent only on $c$ and satisfying
  \begin{align}
    \label{eq:thm-special-rate}
    \max_{\alphab'} f\rbr{\tilde{\wb}^{T},\alphab'}
    - \min_{\wb'} f\rbr{\wb', \tilde{\alphab}^{T}} \le \frac{C'}{\sqrt{T}}.
  \end{align}
\end{theorem}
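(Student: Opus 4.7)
The plan is to first prove a \emph{serialization lemma} showing that every epoch of Algorithm~\ref{alg:synchronous} can be rewritten as a sequential sequence of single-coordinate updates applied to one global copy of $(\wb,\alphab)$, and then to analyze this equivalent serial process with classical convex--concave saddle-point machinery in the style of \citep{NemJudLanSha09}. Once an equivalent serial representation is in hand, a standard online-to-batch argument should bound the duality gap $\varepsilon(\tilde\wb^T,\tilde\alphab^T)$ in terms of three quantities: the initial distance $\|\wb^0-\wb^\star\|^2+\|\alphab^0-\alphab^\star\|^2$, the cumulative step size $\sum_t \eta_t$, and a second-moment term $\sum_t \eta_t^2 M_t$ where $M_t$ collects the squared norms of all per-coordinate updates carried out in epoch $t$. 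With $\eta_t=\eta_0/\sqrt{t}$ as in the pseudocode, the usual calculation then yields the $1/\sqrt{T}$ dependence.

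The serialization lemma is where the distributed structure enters. Within inner iteration $r$, processor $q$ touches only coordinates in $\alphab^{(q)}$ and $\wb^{(\sigma_r(q))}$, and these index sets are pairwise disjoint across $q$ by construction of $\sigma_r$, so the block of parallel updates is identical to any sequential interleaving of the same individual updates. Concatenating over the $p$ inner iterations of an epoch then exhibits the whole epoch as a deterministic pass over all $(i,j)\in\Omega$ in which each pair is visited exactly once. Consequently the aggregated per-epoch increment is, up to a drift term, exactly $\sum_{(i,j)\in\Omega}\nabla f_{i,j}$, which by \eqref{eq:sparse_form} coincides with $\nabla f(\wb,\alphab)$ at the epoch's starting iterate; the drift between the frozen and the live iterates inside one epoch is controlled by $O(\eta_t)$ times the pass length under the uniform bound $c$ on $\|\wb\|,\|\alphab\|,|\nabla\phi_j|,|\nabla\ell_i^\star|$, and it contributes only a second-order term absorbed into the $\eta_t^2 M_t$ bookkeeping.

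Plugging the serial representation into the Lyapunov function $\Phi_t=\|\wb^t-\wb'\|^2+\|\alphab^t-\alphab'\|^2$ for arbitrary feasible $(\wb',\alphab')$ and using the monotonicity inequalities $\langle\nabla_\wb f(\wb^t,\alphab^t),\wb^t-\wb'\rangle\ge f(\wb^t,\alphab^t)-f(\wb',\alphab^t)$ together with their concave analog in $\alphab$, the standard recursion $\Phi_{t+1}\le\Phi_t-2\eta_t\bigl[f(\wb^t,\alphab')-f(\wb',\alphab^t)\bigr]+\eta_t^2 M_t$ emerges. Each per-coordinate update involves at most one $\nabla\phi_j(w_j)$ and one $\nabla\ell_i^\star(-\alpha_i)$, so the boundedness assumption gives $M_t\le C_1\,d$ in the general case: the $d$ factor arises because $\sum_j|\nabla\phi_j(w_j)|^2$ is summed across all $d$ coordinates of $\wb$ per epoch. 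Summing the recursion, taking suprema over $(\wb',\alphab')$ to produce the duality gap, and applying Jensen to the averaged iterate then yields the general bound $C\sqrt{d/T}$.

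For the quadratic regularizer $\phi_j(s)=\tfrac12 s^2$, the per-coordinate update on $w_j$ becomes $w_j\leftarrow(1-\eta_t\lambda/|\Omegabar_j|)\,w_j+\eta_t\,\alpha_i x_{ij}/m$, which under $\eta_t<1/\lambda$ is a contractive affine map; iterating this across the $|\Omegabar_j|$ visits to $w_j$ within one epoch keeps $\sum_j w_j^2$ bounded uniformly in $d$ rather than coordinate-wise. The condition $\sqrt{\max_i|\Omega_i|}<m$ plays the analogous role on the $\alphab$ side, ensuring that the off-diagonal contributions $\alpha_i x_{ij}/m$ aggregate into a $d$-free bound. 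Together these replace $M_t\le C_1\,d$ by a dimension-independent $M_t\le C_1'$, and reinjection into the same recursion gives $C'/\sqrt T$. The main obstacle I expect is exactly this dimension accounting: controlling the intra-epoch drift finely enough to keep it in the second-order term, while in the quadratic case extracting the contraction structure from what is really an interleaved sequence of single-coordinate steps rather than a true gradient step, and doing so without any randomized sampling of $(i,j)$.
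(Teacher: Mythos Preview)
Your proposal is correct and follows essentially the same route as the paper: a serialization lemma showing the parallel epoch is equivalent to a sequential pass over $\Omega$, a per-step convexity/concavity expansion that telescopes into the Lyapunov recursion $\Phi_{t+1}\le\Phi_t-2\eta_t\bigl[f(\wb^t,\alphab')-f(\wb',\alphab^t)\bigr]+C\eta_t^2$ after bounding the intra-epoch drift, and then the standard summation plus Jensen argument with $\eta_t\propto 1/\sqrt t$. The only minor refinement is that in the paper the $d$ factor in the general case arises not simply from $\sum_j|\nabla\phi_j|^2$ but from the cross terms in the $\wb$-side drift bound, which after Cauchy--Schwarz reduce to $\sum_j\lambda^2=\lambda^2 d$; and in the $L_2$ case the contraction $(1-\eta_t\lambda/|\Omegabar_j|)$ is used exactly as you describe to replace this by a $d$-free bound on $|w_{k,j}^t-w_j^t|$.
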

\textbf{Proof:} Please see Appendix~\ref{sec:Proofs}.

To understand the implications of the above theorem, let us assume that
Algorithm~\ref{alg:synchronous} is run with $p \leq \min\rbr{m, d}$
processors with a partitioning of $\Omega$ such that
$\abr{\Omega^{\rbr{q, \sigma_{r}\rbr{q}}}} \approx \frac{1}{p^{2}}
\abr{\Omega}$
and $\abr{J_q} \approx \frac{d}{p}$ for all $q$. As we already noted,
performing updates \eqref{eq:saddle_grad} takes constant time; let us
denote this by $T_{\rm u}$. Moreover, let us assume that communicating
$\wb$ across the network takes constant amount of time denoted by
$T_{\rm c}$, and communicating a subset of $\wb$ takes time proportional
to its cardinality\footnote{Processors communicate on a ring; each
  processor receives $1/p$ fraction of parameters from a predecessor on
  the ring and sends $1/p$ fraction of parameters to a successor on the
  ring. Moreover, as $p$ increases, the size of the messages exchanged
  by the processors decreases. Therefore, our assumption that
  $T_{\rm c}$ is a constant independent of $p$ is reasonable.}. Under
these assumptions, the time for each inner iteration of
Algorithm~\ref{alg:synchronous} can be written as
\begin{align*}
\abs{\Omega^{\rbr{q,\sigma_{r}\rbr{q}}}} T_{\rm u} +
\frac{\abs{J_{\sigma_{r}\rbr{q}}}}{d} T_{\rm c} \approx
\frac{\abs{\Omega}T_{\rm u} }{p^2} + \frac{T_{\rm c}}{p}. 
\end{align*}
Since there are $p$ inner iterations per epoch, the time required to
finish an epoch is ${\abs{\Omega}T_{\rm u} }/ {p} + T_{\rm c}$. As per
Theorem~\ref{thm:convergence} the number of epochs to obtain an
$\epsilon$ accurate solution is independent of $p$.  Therefore, one can
conclude that DSO scales linearly in $p$ as long as
$ {\abr{\Omega}T_{\rm u}}/{p} \gg T_{\rm c}$ holds.  As is to be
expected, for large enough $p$ the cost of communication $T_{\rm c}$
will eventually dominate.

\section{Related Work}
\label{sec:RelatedWork}

Effective parallelization of stochastic optimization for regularized
risk minimization has received significant research attention in recent
years. Because of space limitations, our review of related work will
unfortunately only be partial.

The key difficulty in parallelizing update \eqref{eq:sgd-update} is that
gradient calculation requires us to \emph{read}, while updating the
parameter requires us to \emph{write} to the coordinates of $\wb$.
Consequently, updates have to be executed in serial.  Existing work has
focused on working around the limitation of stochastic optimization by
either a) introducing strategies for computing the stochastic gradient
in parallel (\eg, \citet{LanSmoZin09}), b) updating the parameter in
parallel (\eg, \citet{RecReWriNiu11,BraKyrBicGue11}), c) performing
independent updates and combining the resulting parameter vectors (\eg,
\citet{ZinSmoWeiLi10}), or d) periodically exchanging information
between processors (\eg, \citet{BerTsi97a}). While the former two
strategies are popular in the shared memory setting, the latter two are
popular in the distributed memory setting.  In many cases the
convergence bounds depend on the amount of correlation between data
points and are limited to the case of strongly convex regularizer (\citet{Yan13,ZhaXia15,HsiYuDhi15}).
In contrast our bounds in Theorem~\ref{thm:convergence} do not
depend on such properties of data and more general.
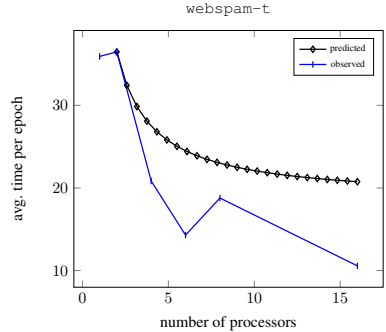
\begin{wrapfigure}{r}{0.5\textwidth}
  \vspace{-0.2 in}
  \begin{center}
  \begin{tikzpicture}[scale=0.6]
    \begin{axis}[xlabel=number of processors,
      ylabel={avg.\ time per epoch},
      legend style={font=\tiny,},
      enlargelimits=true, 
      legend pos=north east,
      title={\texttt{webspam-t}}]
      \addplot[mark=diamond,color=black, thick,domain=2:16] {(35.9269/x) + (36.4735 - (35.9269/2.0))};
      \addlegendentry{predicted}
      \addplot[mark=|,color=blue, thick] table[header=true, x=p, y=webspam] {scale.txt};
      \addlegendentry{observed}
    \end{axis} 
  \end{tikzpicture}
  \end{center}
  \caption{The average time per epoch using $p$ machines on the
    \texttt{webspam-t} dataset. }  
  \label{fig:scale}
  \vspace{-0.3 in}
\end{wrapfigure}

Algorithms that use so-called parameter server to synchronize variable
updates across processors have recently become popular (\eg,
\citet{LiAndSmoYu14}). The main
drawback of these methods is that it is not easy to ``serialize'' the
updates, that is, to replay the updates on a single machine. This
makes proving convergence guarantees, and debugging such frameworks
rather difficult, although some recent progress has been
made~\cite{LiAndSmoYu14}. 

The observation that updates on individual coordinates of the
parameters can be carried out in parallel has been used for other
models. In the context of Latent Dirichlet Allocation,
\citet{YanXuQi09} used a similar observation to derive an efficient
GPU based collapsed Gibbs sampler. On the other hand, for matrix
factorization \citet{GemNijHaaSis11} and \citet{RecRe13} independently
proposed parallel algorithms based on a similar idea. However, to the
best of our knowledge, rewriting \eqref{eq:rrm} as a saddle point
problem in order to discover parallelism is our novel contribution.

\section{Experiments}
\label{sec:EmpiricalEvaluation}

\subsection{Scaling}
\label{sec:Scaling}

We first verify, that the per epoch complexity of DSO scales as
$\abr{\Omega}T_{\rm u} / p + T_{\rm c}$, as predicted by our analysis in
Section~\ref{sec:ConvergenceProof}. Towards this end, we took tthe
\texttt{webspam-t} dataset of \citet{WebCavPu06}, which is one of the
largest datasets we could comfortably fit on a single machine. We let
$p = \cbr{1, 2, 4, 8, 16}$ while fixing the number of cores on each
machine to be 4.

Using the average time per epoch on one and two machines, one can
estimate $\abr{\Omega} T_{\rm u}$ and $T_{\rm c}$. Given these values,
one can then predict the time per iteration for other values of
$p$. Figure~\ref{fig:scale} shows the predicted time and the measured
time averaged over 40 epochs. As can be seen, the time per epoch indeed
goes down as $\approx 1/p$ as predicted by the theory. The test error
and objective function values on multiple machines was very close to the
test error and objective function values observed on a single machine,
thus confirming Theorem~\ref{thm:convergence}.


\subsection{Comparison With Other Solvers}
\label{sec:Comparisonwithother}

\newcommand{\plotserial}[6]{
  \begin{figure}
    \centering
    \tikzsetnextfilename{serial_#1_#2_1e-5_#4_#5}
    \begin{tikzpicture}[scale=0.5]
      \begin{axis}[xlabel=number of iteration,
        ylabel={test rror},
        legend style={font=\tiny,},
        enlargelimits=false, 
        legend pos=north east,
        ymax=0.2,
        xmax=40,
        title={$\lambda=10^{-5}$}]
        \addplot[mark=|,color=blue,  thick] table[header=true, x=iter, y=#5] {../../Results/serial/#2/#1/ada_1e-5.txt};
        \addlegendentry{SGD}
        \addplot[mark=|,color=black, thick] table[header=true, x=iter, y=#6] {../../Results/serial/#2/#1/bmrm_1e-5.txt};
        \addlegendentry{BMRM}
        \addplot[mark=|,color=red, thick] table[header=true, x=iter, y=test_error] {../../Results/serial/#2/#1/saddle_adasvrg_1e-5.txt};
        \addlegendentry{DSO}
      \end{axis} 
    \end{tikzpicture}
    \tikzsetnextfilename{serial_#1_#2_1e-6_#4_#5}
    \begin{tikzpicture}[scale=0.5]
      \begin{axis}[xlabel=number of iteration,
          ylabel={test error},
          legend style={font=\tiny,},
          enlargelimits=false, 
          legend pos=north east,
          ymax=0.2,
          xmax=40,
          restrict x to domain=0:#4, 
          title={$\lambda=10^{-6}$}]
          \addplot[mark=|,color=blue,  thick] table[header=true, x=iter, y=#5] {../../Results/serial/#2/#1/ada_1e-6.txt};
          \addlegendentry{SGD}
          \addplot[mark=|,color=black, thick] table[header=true, x=iter, y=#6] {../../Results/serial/#2/#1/bmrm_1e-6.txt};
          \addlegendentry{BMRM}
          \addplot[mark=|,color=red, thick] table[header=true, x=iter, y=test_error] {../../Results/serial/#2/#1/saddle_adasvrg_1e-6.txt};
          \addlegendentry{DSO}
      \end{axis} 
    \end{tikzpicture}
    \caption{The test error of different optimization algorithms on
      linear SVM with \texttt{real-sim} dataset, as a function of the
      number of iteration.}
  \label{fig:serial_expt}
  \end{figure}
  
}

\plotserial{real-sim}{svm}{1e-6}{100}{testerror}{error}

\def\testplot#1#2{
  \def\Task{#1}
  \def\Dataset{#2}
  \paralleltimetestplot
}

\newcommand{\paralleltimetestplot}[9]{
  \tikzsetnextfilename{nips15_parallel_\Dataset_#2_\Task_4}
  \begin{tikzpicture}[scale=0.5]
    \begin{axis}[xlabel=time (seconds),
      ylabel={test error}, 
      legend style={font=\tiny,},
      enlargelimits=false, 
      legend pos=north east,
      restrict y to domain=0:0.5,
      ymin={#8}, 
      ymax={#9}, 
      xmin={0}, 
      xmax={#5},
      title={$\lambda=$ #3}]
      \addplot[mark=o,color=black, dashed, thick]
      table[header=false, x index=3, y index=4]  {../../Results/parallel/\Task/\Dataset/bmrm_#1.txt};
      \addlegendentry{BMRM} 
      \addplot[mark=star,color=blue,thick]
      table[header=true, x=time,y=testerror] {../../Results/parallel/\Task/\Dataset/psgd_adagrad_\Dataset_lambda_#1.txt};
      \addlegendentry{PSGD}
      \addplot[mark=+,color=red,thick]
      table[header=false, x index=2,y index=7, col sep=comma] {../../Results/parallel/\Task/\Dataset/ddso_dcdinit_\Dataset_lambda_#1.txt};
      \addlegendentry{DSO}          
    \end{axis}
  \end{tikzpicture}
}

In our single machine experiments we compare DSO with Stochastic
Gradient Descent (SGD) and Bundle Methods for Regularized risk
Minimization (BMRM) of \citet{TeoVisSmoLe10}. In our multi-machine
experiments we compare with Parallel Stochastic Gradient Descent (PSGD)
of \citet{ZinSmoWeiLi10} and BMRM.  We chose these competitors because,
just like DSO, they are general purpose solvers for regularized risk
minimization \eqref{eq:rrm}, and hence can solve non-smooth problems
such as SVMs as well as smooth problems such as logistic
regression. Moreover, BMRM is a specialized solver for
regularized risk minimization, which has similar performance to other
first-order solvers such as ADMM.
\begin{wrapfigure}{r}{0.5\textwidth}
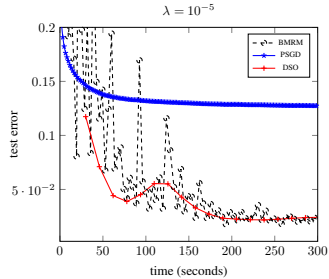

  \begin{center}
    \testplot{logistic}{webspamtrigram}{1e-5}{0.000010}{$10^{-5}$}{50}{300}{0.005}{0.5}{0.002}{0.2}
  \end{center}
  \caption{Logistic regression with \texttt{webspam-t} dataset. Test
    error as a function of elapsed time.}
  \label{fig:par_logistic_compare}
\vspace{-0.2in}
\end{wrapfigure}

We selected two representative datasets and two values of the
regularization parameter $\lambda = \cbr{10^{-5}, 10^{-6}}$ to present
our results. For the single machine experiments we used the
\texttt{real-sim} dataset from \citet{HsiChaLinKeeetal08}, while for the
multi-machine experiments we used \texttt{webspam-t}.  Details of the
datasets can be found in Table~\ref{tab:data} in the appendix.  We use
test error rate as comparison metric, since stochastic optimization
algorithms are efficient in terms of minimizing generalization error,
not training error \citep{BotBou11}.  The results for single machine
experiments on linear SVM training can be found in
Figure~\ref{fig:serial_expt}.  As can be seen, DSO shows comparable
efficiency to that of SGD, and outperforms BMRM.  This demonstrates that
saddlepoint optimization is a viable strategy even in serial setting.

Our multi-machine experimental results for linear SVM training can be
found in Figure~\ref{fig:par_svm_compare}.  As can be seen, PSGD
converges very quickly, but the quality of the final solution is poor;
this is probably because PSGD only solves processor-local problems and
does not have a guarantee to converge to the global optimum.  On the
other hand, both BMRM and DSO converges to similar quality solutions,
and at fairly comparable rates. Similar trends we observed on logistic
regression. Therefore we only show the results with $10^{-5}$ in
Figure~\ref{fig:par_logistic_compare}.
\begin{figure}
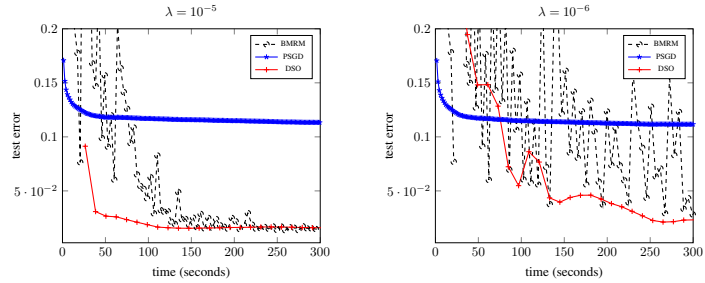

  \centering
  \testplot{svm}{webspamtrigram}{1e-5}{0.000010}{$10^{-5}$}{50}{300}{0.005}{0.5}{0.002}{0.2}
  \testplot{svm}{webspamtrigram}{1e-6}{0.000001}{$10^{-6}$}{50}{300}{0.005}{0.5}{0.002}{0.2}
  \caption{Test errors of different parallel optimization algorithms on
    linear SVM with \texttt{webspam-t} dataset, as a function of elapsed
    time.}
  \label{fig:par_svm_compare}
\end{figure}

\subsection{Terascale Learning with DSO}
\label{sec:TerascLearnwith}

Next, we demonstrate the scalability of DSO on one of the largest
publicly available datasets. Following the same experimental setup as
\citet{AgaChaDudetal14}, we work with the splice site recognition
dataset \cite{SonFra10} which contains 50 million training data points,
each of which has around 11.7 million dimensions. Each datapoint has
approximately 2000 non-zero coordinates and the entire dataset requires
around \textbf{3 TB} of storage. Previously \cite{SonFra10}, it has been shown
that sub-sampling reduces performance, and therefore we need to use the
entire dataset for training.

Similar to \citet{AgaChaDudetal14}, our goal is not to show the best
classification accuracy on this data (this is best left to domain
experts and feature designers). Instead, we wish to demonstrate the
scalability of DSO and establish that a) it can scale to such massive
datasets, and b) the empirical performance as measured by AUPRC (Area
Under Precision-Recall Curve) improves as a function of time.
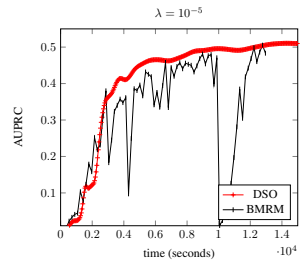
\begin{wrapfigure}{r}{0.5\textwidth}
  \tikzsetnextfilename{nips15_dnaexp}
  \begin{center}
    \begin{tikzpicture}[scale=0.46]
      \begin{axis}[xlabel=time (seconds),
        ylabel={AUPRC}, 
        enlargelimits=false, 
        legend pos=south east,
        restrict y to domain=0:0.55,
        title={$\lambda = 10^{-5}$}, 
        ymin={0.01}, ymax={0.55}, xmin={0}, xmax={15000}]
        
        \addplot[mark=+,color=red,thick]
        table[x index=2, y index=8, header=false, col sep=comma]
        {./dnaexp_result.txt};

        \addlegendentry{DSO}          

        \addplot[mark=|,color=black,thick]
        table[x index=1, y index=11, header=false, col sep=space]
        {./dnaexp_result2.txt};
        \addlegendentry{BMRM}         
      \end{axis}
    \end{tikzpicture}
  \end{center}
  \caption{AUPRC (Area Under Precision-Recall Curve) as a function of
    elapsed time on linear SVM with splice site recognition dataset.}
  \label{fig:dnaexp}
  \vspace{-0.2in}
\end{wrapfigure}


We used 14 machines with 8 cores per machine to train a linear SVM, and
plot AUPRC as a function of time in Figure~\ref{fig:dnaexp}. Since PSGD
did not perform well in earlier experiments, here we restrict our
comparison to BMRM. This experiment demonstrates one of the advantages
of stochastic optimization, namely that the test performance increases
steadily as a function of the number of iterations. On the other hand,
for a batch solver like BMRM the AUPRC fluctuates as a function of the
iteration number. The practical consequence of this observation is that,
one usually needs to wait for a batch optimizer to converge before using
the resulting solution. On the other hand, even the partial solutions
produced by a stochastic optimizer such as DSO usually exhibit good
generalization properties.

\section{Discussion and Conclusion}
\label{sec:DiscussionConclusion}

We presented a new reformulation of regularized risk minimization as a
saddle point problem, and showed that one can derive an efficient
distributed stochastic optimizer (DSO). We also proved rates of
convergence of DSO.  Unlike other solvers, our algorithm does not
require strong convexity and thus has wider applicability.  Our
experimental results show that DSO is competitive with state-of-the-art
optimizers such as BMRM and SGD, and outperforms simple parallel
stochastic optimization algorithms such as PSGD. 

A natural next step is to derive an asynchronous version of DSO
algorithm along the lines of the NOMAD algorithm proposed by
\citet{YunYuHsietal13}. We believe that our convergence proof which
only relies on having an equivalent serial sequence of updates will
still apply. Of course, there is also more room to further improve the
performance of DSO by deriving better step size adaptation schedules,
and exploiting memory caching to speed up random access. 
\newpage
\bibliographystyle{abbrvnat}
{\small
\bibliography{rerm}
}
\clearpage
\appendix









    

    
    



\section{Dataset and Implementation Details}
\label{sec:DatasetDetails}

\begin{table}
  \centering
  \caption{Summary of the datasets used in our experiments. $m$ is the
    total \# of examples, $d$ is the \# of features, $s$ is the
    feature density (\% of features that are non-zero).
    K/M/G denotes a thousand/million/billion.  }
  {\small
  \begin{tabular}{|l|r|r|r|r|} 
    \hline
    \textbf{Name} & $m$\phantom{.aa} &
    $d$\phantom{aaa} & $\abr{\Omega}$ & $s$(\%) \\
    \hline
    {\small \texttt{real-sim}}& 57.76K & 20.95K   & 2.97M & 0.245 
    \\ \hline
    {\small \texttt{webspam-t}}  & 350.00K  & 16.61M & 1.28G & 0.022
    \\ \hline
   \end{tabular}
   }
  \label{tab:data}
\end{table}

We implemented DSO, SGD, and PSGD ourselves, while for BMRM we used
the optimized implementation that is available from the toolkit for
advanced optimization
(TAO)(\url{https://bitbucket.org/sarich/tao-2.2}).  All algorithms are
implemented in \texttt{C++} and use \texttt{MPI} for communication.
In our multi-machine experiments, each algorithm was run on four
machines with eight cores per machine.  DSO, SGD, and PSGD used
AdaGrad \citep{DucHazSin10} step size adaptation. We also used
Stochastic Variance Reduced Gradient (SVRG) of \citet{JohZha13} to
accelerate updates of DSO.  In the multi-machine setting DSO
initializes parameters of each MPI process by locally executing twenty
iterations of dual coordinate descent \citep{FanChaHsiWanetal08} on
its local data to locally initialize $w_j$ and $\alpha_i$ parameters;
then $w_j$ values were averaged across machines.
We chose binary logistic regression and SVM as test problems, i.e.,
$\phi_j(s)=\half s^2$ and $\ell_i(u)=\log(1+\exp(-u)),[1-u]_+$. To prevent
degeneracy in logistic regression, values of $\alpha_{i}$'s are
restricted to $(10^{-14}, 1-10^{-14})$, while in the case of linear
SVM they are restricted to $[0, 1]$.  Similarly, the $w_{j}$'s are
restricted to lie in the interval $[-1/\sqrt{\lambda},
  1/\sqrt{\lambda}]$ for linear SVM and $[-\sqrt{\log(2)/\lambda},
  \sqrt{\log(2)/\lambda}]$ for logistic regression, following the idea
of \citet{ShaSinSre07}.

\section{Proofs}
\label{sec:Proofs}

Let $\rbr{\wb^{t}, \alphab^{t}}$ denote the parameter vector after the
$t$-th epoch. Without loss of generality, we will focus on the inner
iterations of the $\rbr{t+1}$-st epoch. Consider a time instance at
which $k$ updates corresponding to $\rbr{i_{1}, j_{1}}, \rbr{i_{2},
  j_{2}}, \ldots, \rbr{i_{k}, j_{k}}$ have been performed on
$\rbr{\wb^{t}, \alphab^{t}}$, which results in the parameter values
denoted by $\rbr{\wb_{k}^{t}, \alphab_{k}^{t}}$. In terms of analysis,
it is useful to recognize that there is a natural ordering of these
updates as follows: $\rbr{i_{a}, j_{a}}$ appears before $\rbr{i_{b},
  j_{b}}$ if updates to $\rbr{w_{j_{a}}, \alpha_{i_{a}}}$ were
performed before updating $\rbr{w_{j_{b}}, \alpha_{i_{b}}}$. On the
other hand, if $\rbr{w_{j_{a}}, \alpha_{i_{a}}}$ and $\rbr{w_{j_{b}},
  \alpha_{i_{b}}}$ were updated at the same time because we have $p$
processors simultaneously updating the parameters, then the updates
are ordered according to the rank of the processor performing the
update\footnote{Any other tie-breaking rule would also suffice.}. The
following lemma asserts that the updates are serializable in the sense
that $\rbr{\wb^{t}_{k}, \alphab^{t}_{k}}$ can be recovered by
performing $k$ serial updates on function $f_{k}$ which is defined
below.

\begin{lemma}  
  For all $k$ and $t$ we have
  \begin{align}
    \label{def3}
    \wb^t_k = \wb^t_{k-1} -\eta_t \nabla_{\wb} f_k\rbr{\wb^t_{k-1}
      ,\alphab^t_{k-1}} \text{ and }
    \alphab^t_k = \alphab^t_{k-1} -\eta_t \nabla_{\alphab}
    {-f}_k\rbr{\wb^t_{k-1} ,\alphab^t_{k-1}}, 
  \end{align}
  where 
  \begin{align*}
    f_{k} (\wb, \alphab) := f_{i_{k}, j_{k}}\rbr{w_{j_k}, \alpha_{i_k}}.
  \end{align*}  
\end{lemma}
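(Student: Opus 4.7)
The plan is to split the argument into two parts: (i) a \emph{serializability} claim showing that the concurrent updates of the $p$ processors in a single inner iteration can be replayed sequentially without changing the resulting parameter values, and (ii) a \emph{coordinate-wise matching} claim showing that each individual update, when viewed serially, is exactly the gradient step on $f_k$ asserted by \eqref{def3}.

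For part (i), I would argue from the design of the partition. At inner iteration $r$ of epoch $t$, processor $q$ only touches the coordinates in $\Omega^{(q,\sigma_r(q))}$, so it reads/writes $\alpha_i$ for $i\in I_q$ and $w_j$ for $j\in J_{\sigma_r(q)}$. Because $\{I_q\}_q$ is a partition of $\{1,\dots,m\}$ and $\sigma_r$ is a permutation so that $\{J_{\sigma_r(q)}\}_q$ is a partition of $\{1,\dots,d\}$, the coordinate sets touched by distinct processors are pairwise disjoint. Hence no processor ever reads a coordinate that another processor is simultaneously writing. This means one can linearize the updates in the natural order defined just before the lemma (within-processor order preserved, ties broken by processor rank), and obtain, after $k$ serialized updates, exactly the same parameter vectors $(\wb^t_k,\alphab^t_k)$ that the parallel execution produces.

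For part (ii), I would use the decomposition \eqref{eq:sparse_form}. Since $f_k(\wb,\alphab)=f_{i_k,j_k}(w_{j_k},\alpha_{i_k})$ depends on $\wb$ only through the single coordinate $w_{j_k}$ and on $\alphab$ only through $\alpha_{i_k}$, its gradient $\nabla_\wb f_k$ has its only nonzero entry in position $j_k$, namely
\[
\frac{\lambda\,\nabla\phi_{j_k}(w_{j_k})}{|\Omegabar_{j_k}|}-\frac{\alpha_{i_k}\,x_{i_k j_k}}{m},
\]
and similarly $\nabla_\alphab(-f_k)$ is supported on position $i_k$ with value $-\bigl(\tfrac{\nabla\ell^\star_{i_k}(-\alpha_{i_k})}{m|\Omega_{i_k}|}-\tfrac{w_{j_k}x_{i_k j_k}}{m}\bigr)$. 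Comparing these expressions with the update rule in line~8 of Algorithm~\ref{alg:synchronous}, one sees that the $k$-th serialized update is precisely $w_{j_k}\leftarrow w_{j_k}-\eta_t\,[\nabla_\wb f_k(\wb^t_{k-1},\alphab^t_{k-1})]_{j_k}$ and $\alpha_{i_k}\leftarrow\alpha_{i_k}-\eta_t\,[\nabla_\alphab(-f_k)(\wb^t_{k-1},\alphab^t_{k-1})]_{i_k}$, with every other coordinate left unchanged; but the gradients are themselves zero on those other coordinates, so \eqref{def3} holds.

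I expect part (i) to be the only real subtlety — once the disjointness of the active coordinate blocks is established, part (ii) is a direct pattern-match between the algorithmic update and the partial derivatives of a single summand $f_{i_k,j_k}$. The argument does not need any smoothness or convexity assumption beyond what is already used to write $\nabla\phi_{j_k}$ and $\nabla\ell^\star_{i_k}$ (subgradients suffice when applicable), and it is precisely this serialization that will let the subsequent convergence analysis treat DSO as a standard stochastic saddle-point method with a (non-uniform) coordinate sampling distribution.
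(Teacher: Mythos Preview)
Your proposal is correct and follows essentially the same approach as the paper. The paper's proof also hinges on the observation that, because of the partitioning, only processor $q$ can modify the coordinates $(w_{j_k},\alpha_{i_k})$ in the relevant window, so the values processor $q$ actually reads coincide with the serialized values at step $k-1$; it then uses that $f_k$ depends only on those two coordinates to conclude \eqref{def3}. Your part (i) packages this same argument at the level of disjoint coordinate blocks, and your part (ii) spells out the gradient computation that the paper leaves implicit in its final line.
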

\begin{proof}
  Let $q$ be the processor which performed the $k$-th update in the
  $\rbr{t+1}$-st epoch. Moreover, let $\rbr{k-\delta}$ be the most
  recent pervious update done by processor $q$.  There exists $\delta
  \ge \delta',\delta'' \ge 1$ such that $\rbr{\wb^t_{k-\delta'},
    \alphab^t_{k-\delta''}}$ be the parameter values read by the $q$-th
  processor to the perform $k$-th update.  Because of our data
  partitioning scheme, only $q$ can change the value of the $i_k$-th
  component of $\alphab$ and the $j_k$-th component of $\wb$. Therefore,
  we have
  \begin{align}
    \alpha^t_{k-1,i_k} = \alpha^t_{\kappa,i_k},  \qquad k-\delta \le
    \kappa \le k-1,   \\ 
    w_{k-1, j_k} = w^t_{\kappa, j_{k}}, \qquad k-\delta \le \kappa \le
    k-1.
  \end{align}
  Since $f_k$ is invariant to changes in any coordinate other than
  $\rbr{i_k,j_k}$, we have
  \begin{align}
    f_k\rbr{\wb^t_{k-\delta'} ,\alphab^t_{k-\delta''}} =
    f_k\rbr{\wb^t_{k-1} ,\alphab^t_{k-1}}.  
  \end{align}
  The claim holds because we can write the $k$-th update formula as
  \begin{align}
    \wb^t_k & = \wb^t_{k-1} -\eta_t \nabla_{\wb}
    f_k\rbr{\wb^t_{k-\delta'} ,\alphab^t_{k-\delta''}} \text{ and } \\  
    \alphab^t_k & = \alphab^t_{k-1} -\eta_t \nabla_{\alphab}
    {-f}_k\rbr{\wb^t_{k-\delta'} ,\alphab^t_{k-\delta''}}. 
  \end{align}
\end{proof}
As a consequence of the above lemma, it suffices to analyze the serial
convergence of the function $f(\wb, \alphab) := \sum_{k} f_{k}(\wb,
\alphab)$. Towards this end, we first prove the following technical
lemma. Note that it is closely related to general results on convex
functions ( e.g., Theorem 3.2.2 in \cite{Nes04}, Lemma 14. 1. in \cite{ShaBen14} ).
\begin{lemma}
  Suppose there exists $C>0$ and $D>0$ such that for all
  $\rbr{\wb, \alphab}$ and $\rbr{\wb', \alphab'}$ we have $\nbr{\wb -
    \wb'}^{2} + \nbr{\alphab - \alphab'}^{2} \leq D$, and for all $t =
  1, \ldots, T$ and all $\rbr{\wb,\alphab}$ we have
  \begin{align}
    \label{eq:key-lemma}
    \nbr{\wb^{t+1} - \wb}^{2} + \nbr{\alphab^{t+1} - \alphab}^{2} \le
    \nbr{\wb^{t} - \wb}^{2} + \nbr{\alphab^{t} - \alphab}^{2} - 2
    \eta_{t} \rbr{f\rbr{\wb^{t},\alphab} - f\rbr{\wb,\alphab^{t}}} + C\eta_{t}^{2}
   ,
  \end{align}
  then setting $\eta_{t} = \sqrt{\frac{D}{2Ct}}$ ensures that 
  \begin{align}
    \label{eq:rate}
    \varepsilon\rbr{\tilde{\wb}^{T}, \tilde{\alphab}^{T}} & \le
    \sqrt{\frac{2DC}{T}}.
 \end{align}
\end{lemma}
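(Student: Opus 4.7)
The plan is to chain the one-step recursion \eqref{eq:key-lemma} into a cumulative bound on the partial gaps $f\rbr{\wb^{t},\alphab} - f\rbr{\wb,\alphab^{t}}$, and then convert that cumulative bound into a statement about the averaged iterates $(\tilde{\wb}^{T},\tilde{\alphab}^{T})$ by exploiting the convex-concave structure of $f$.

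First I would rearrange \eqref{eq:key-lemma} to isolate the instantaneous gap, giving
\[
f\rbr{\wb^{t},\alphab} - f\rbr{\wb,\alphab^{t}} \le \frac{1}{2\eta_{t}}\sbr{a_{t}(\wb,\alphab) - a_{t+1}(\wb,\alphab)} + \frac{C\eta_{t}}{2},
\]
where $a_{t}(\wb,\alphab) := \nbr{\wb^{t}-\wb}^{2} + \nbr{\alphab^{t}-\alphab}^{2}$, and then sum from $t=1$ to $T$. The step-size sum $\frac{C}{2}\sum_{t=1}^{T}\eta_{t}$ can be kept aside; the rest is a telescoping sum with non-constant weights $1/(2\eta_{t})$, which is the one genuinely delicate piece of the argument.

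The main technical step is to handle that weighted telescoping via Abel summation by parts, rewriting
\[
\sum_{t=1}^{T}\frac{a_{t}-a_{t+1}}{2\eta_{t}} = \frac{a_{1}}{2\eta_{1}} - \frac{a_{T+1}}{2\eta_{T}} + \sum_{t=2}^{T} a_{t}\rbr{\frac{1}{2\eta_{t}} - \frac{1}{2\eta_{t-1}}}.
\]
Since $\eta_{t} = \sqrt{D/(2Ct)}$ is decreasing in $t$, the coefficients multiplying $a_{t}$ in the trailing sum are non-negative. Invoking the diameter hypothesis $a_{t}\le D$ term-by-term and telescoping the remainder then collapses the whole expression into $D/(2\eta_{T})$. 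This is the step that most substantively uses the boundedness assumption, and it is where I expect the only real subtlety to lie.

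Next, I would apply Jensen's inequality, using convexity of $f$ in its first argument and concavity in its second:
\[
f\rbr{\tilde{\wb}^{T},\alphab} - f\rbr{\wb,\tilde{\alphab}^{T}} \le \frac{1}{T}\sum_{t=1}^{T}\sbr{f\rbr{\wb^{t},\alphab} - f\rbr{\wb,\alphab^{t}}} \le \frac{D}{2T\eta_{T}} + \frac{C}{2T}\sum_{t=1}^{T}\eta_{t},
\]
which is valid for every $(\wb,\alphab)$, so taking $\sup_{\alphab}$ and $\inf_{\wb}$ on the left produces exactly $\varepsilon(\tilde{\wb}^{T},\tilde{\alphab}^{T})$ on the right. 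Finally, plugging $\eta_{t}=\sqrt{D/(2Ct)}$ into the two surviving terms reduces the estimate to elementary arithmetic: $D/(2T\eta_{T}) = \tfrac{1}{2}\sqrt{2DC/T}$, and $\sum_{t=1}^{T}1/\sqrt{t}\le 2\sqrt{T}$ gives $(C/(2T))\sum_{t}\eta_{t}\le \tfrac{1}{2}\sqrt{2DC/T}$, so the two halves sum to $\sqrt{2DC/T}$ as claimed.
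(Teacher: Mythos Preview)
Your proposal is correct and follows essentially the same route as the paper: rearrange \eqref{eq:key-lemma}, sum over $t$, handle the weighted telescope via Abel summation together with the diameter bound $a_{t}\le D$ to collapse it to $D/(2\eta_{T})$, apply Jensen in each argument of $f$, and finish with the elementary estimate $\sum_{t\le T}1/\sqrt{t}\le 2\sqrt{T}$. The only cosmetic differences are your introduction of the shorthand $a_{t}$ and a slightly cleaner indexing in the Abel step; the arithmetic and the final constants match the paper exactly.
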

\begin{proof}
  Rearrange \eqref{eq:key-lemma} and divide by $\eta_{t}$ to obtain
  \begin{align*}
    2\rbr{f\rbr{\wb^{t},\alphab} - f\rbr{\wb,\alphab^{t}}} \le
    \eta_{t} C + \frac{1}{\eta_t} 
    \Big(
    \nbr{\wb^{t} - \wb} ^ 2  + \nbr{\alphab^{t} -
        \alphab}^{2} - \nbr{\wb^{t+1} - \wb}^{2} - \nbr{\alphab^{t+1} -
        \alphab}^{2} \Big).
  \end{align*}
  Summing the above for $t=1,\ldots, T$ yields 
  \begin{align}
    \nonumber
    2\sum_{t=1}^{T} & f\rbr{\wb^{t},\alphab} - 2\sum_{t=1}^{T} f\rbr{\wb,\alphab^{t}}
    \le  
    \sum_{t=1}^{T} \eta_{t} C + \frac{1}{\eta_{1}}\rbr{\nbr{\wb^{1} - \wb}^{2} +
      \nbr{\alphab^{1} - \alphab}^{2}} \\
    \nonumber
    &+ \sum_{t=2}^{T-1} \rbr{ \frac{1}{\eta_{t+1}} -
      \frac{1}{\eta_{t}}} \rbr{ \nbr{\wb^{t} - \wb}^{2} +
      \nbr{\alphab^{t} -
        \alphab}^{2}}\\
    \nonumber 
    &- \frac{1}{\eta_{T}} \rbr{ \nbr{\wb^{T+1} - \wb}^{2} + \nbr{\alphab^{T+1}
        - \alphab}^{2}} \\ 
    \nonumber
    &\le \sum_{t=1}^{T} \eta_{t} C + 
    \frac{1}{\eta_{1}} D + \sum_{t=2}^{T-1} \rbr{ \frac{1}{\eta_{t+1}} -
      \frac{1}{\eta_{t}}} D \\ 
    \label{eq:int-bound}
    &\le \sum_{t=1}^{T} \eta_{t} C + \frac{1}{\eta_{T}} D.
  \end{align}
  Thanks to convexity in $\wb$ and concavity in $\alphab$
  \begin{align}
    \label{eq:avg-w}
    f\rbr{\tilde{\wb}^{T},\alphab} = f\rbr{\frac{1}{T}\sum_{t=1}^{T}
      \wb^{t}, \alphab}
    \le \frac{1}{T} \sum_{t=1}^{T} f\rbr{\wb^{t},\alphab}, \text{ and }  \\
    \label{eq:avg-alpha}
    -f\rbr{\wb,\tilde{\alphab}^{T}} = -f\rbr{\wb, \frac{1}{T}
      \sum_{t=1}^{T} \alphab^{t}} \le \frac{1}{T} \sum_{t=1}^{T}
    {-f}\rbr{\wb,\alphab^{t}}.
  \end{align}
  Substituting \eqref{eq:avg-w} and \eqref{eq:avg-alpha} into
  \eqref{eq:int-bound}, and letting $\eta_{t} = \sqrt{\frac{D}{2Ct}}$
  leads to the following sequence of inequalities
  \begin{align*}
    f\rbr{\tilde{\wb}^{T},\alphab} - f\rbr{\wb,\tilde{\alphab}^{T}} \le
    \frac{\sum_{t=1}^{T} \eta_{t} C + \frac{1}{\eta_{T}} D}{2T} \leq 
    \frac{\sqrt{DC}}{2T} \sum_{t=1}^{T} \frac{1}{\sqrt{2t}} + \sqrt{\frac{DC}{2T}}. 
  \end{align*}
  The claim in \eqref{eq:rate} follows by using $\sum_{t=1}^{T}
  \frac{1}{\sqrt{2t}} \leq \sqrt{2T}$.
\end{proof}

To prove convergence of DSO it suffices to show that it satisfies
\eqref{eq:key-lemma}. In order to derive \eqref{eq:thm-rate}, $C$ of
\eqref{eq:key-lemma} has to be the order of $d$. In case of
$L_2$-regularizer, it has to be dependent only on $c$ to obtain \eqref{eq:thm-special-rate}. The proof is
related to techniques outlined in \citet{NedBer01}.
\begin{lemma}
  Under the assumptions outlined in Theorem~\ref{thm:convergence},
  Algorithm~\ref{alg:synchronous} satisfies \eqref{eq:key-lemma} with
  $C$ of the form of $C=C_1 d$. It does with $C=C_2$ in case of $L_2$-regularizer.
  Here $C_1$ and $C_2$ is dependent only on $c$.
\end{lemma}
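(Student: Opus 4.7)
The plan is to reduce one epoch of DSO to a serial sequence of saddle-point updates via the preceding serializability lemma, derive a per-step energy inequality, telescope across the epoch, and then estimate the two leftover quantities (sum of squared gradients and intra-epoch drift error) using the structural form of $f_{i,j}$. For the $k$-th serial step, the standard squared-distance expansion combined with convexity of $f_k$ in $\wb$ and concavity in $\alphab$ gives
\begin{align*}
\nbr{\wb^t_k - \wb}^2 + \nbr{\alphab^t_k - \alphab}^2
\le {} & \nbr{\wb^t_{k-1} - \wb}^2 + \nbr{\alphab^t_{k-1} - \alphab}^2 \\
& - 2 \eta_t \sbr{ f_k(\wb^t_{k-1}, \alphab) - f_k(\wb, \alphab^t_{k-1}) }
+ \eta_t^2 \rbr{ \nbr{\nabla_\wb f_k}^2 + \nbr{\nabla_\alphab f_k}^2 }.
\end{align*}
Telescoping over $k = 1, \ldots, \abr{\Omega}$ within one epoch produces the desired inequality shape, provided (i) $\sum_k ( \nbr{\nabla_\wb f_k}^2 + \nbr{\nabla_\alphab f_k}^2 )$ is $O(d)$ in general and $O(1)$ in the $L_2$ case, and (ii) $\sum_k [ f_k(\wb^t_{k-1}, \alphab) - f_k(\wb, \alphab^t_{k-1}) ]$ differs from $f(\wb^t, \alphab) - f(\wb, \alphab^t)$ by at most $O(\eta_t d)$ (resp.\ $O(\eta_t)$).

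For (i), I would plug in $\partial_{w_j} f_{i,j} = \lambda \nabla\phi_j(w_j)/\abr{\Omegabar_j} - \alpha_i x_{ij}/m$ and $\partial_{\alpha_i} f_{i,j} = \nabla \ell_i^\star(-\alpha_i)/(m\abr{\Omega_i}) - w_j x_{ij}/m$ and use the re-summation identity $\sum_{(i,j) \in \Omega} a_j / \abr{\Omegabar_j}^2 = \sum_j a_j / \abr{\Omegabar_j}$. The regularizer contribution to $\sum_k \nbr{\nabla_\wb f_k}^2$ then equals $\lambda^2 \sum_j (\nabla\phi_j(w_j))^2 / \abr{\Omegabar_j}$, bounded by $\lambda^2 c^2 d$ in general, and by $\lambda^2 \sum_j w_j^2 / \abr{\Omegabar_j} \le \lambda^2 \nbr{\wb}^2 \le \lambda^2 c^2$ in the $L_2$ case (where $\nabla\phi_j(w_j) = w_j$ collapses the sum to $\nbr{\wb}^2$ without a $d$-factor). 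The cross-term $\alpha_i x_{ij}/m$ and the analogous computation for $\partial_{\alpha_i} f_{i,j}$ are handled similarly and remain $O(1)$ under the stated hypotheses together with standard data-norm control.

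For (ii), convexity of $f_k$ in its $\wb$-argument gives $\abr{ f_k(\wb^t_{k-1}, \alphab) - f_k(\wb^t, \alphab) } \le \abr{ \partial_{w_{j_k}} f_{i_k, j_k} } \cdot \abr{ w^t_{k-1, j_k} - w^t_{j_k} }$, and the drift $\abr{ w^t_{k-1, j_k} - w^t_{j_k} }$ is bounded by $\eta_t$ times the cumulative magnitudes of all previous intra-epoch gradient contributions to coordinate $j_k$. Grouping these by $j$ and re-applying the same $1/\abr{\Omegabar_j}$ re-summation identity reduces the total drift to $O(\eta_t d)$; multiplying by the leading $2\eta_t$ factor absorbs this into $C\eta_t^2$. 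In the $L_2$ case, the assumption $\eta_t < 1/\lambda$ keeps the coordinate-wise contraction factor $1 - \eta_t \lambda / \abr{\Omegabar_j} \in (0,1)$, preventing geometric accumulation and yielding the stronger $O(\eta_t)$ drift bound. The symmetric argument handles the $\alphab$-side drift.

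The main obstacle is step (ii): a naive Lipschitz bound would charge $\eta_t \abr{\Omegabar_j}^2$ per coordinate, giving a prohibitive $O(\abr{\Omega})$ blowup. The essential structural win is that the regularizer contribution to $\partial_{w_j} f_{i,j}$ already carries a $1/\abr{\Omegabar_j}$ factor, which exactly cancels one of the $\abr{\Omegabar_j}$ powers introduced by the epoch sum and by Cauchy--Schwarz applied to the drift. This cancellation is what ultimately yields the $O(d)$ (and, in the $L_2$ case, $O(1)$) bound on $C$ required by Theorem~\ref{thm:convergence}.
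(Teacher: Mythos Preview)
Your proposal is correct and follows essentially the same approach as the paper: serialize via the preceding lemma, expand per-step squared distances using convexity in $\wb$ and concavity in $\alphab$, telescope over the epoch, and then bound the squared-gradient sum and the intra-epoch drift by exploiting the $1/\abr{\Omegabar_j}$ factor carried by $\partial_{w_j} f_{i,j}$. The only organizational difference is that the paper merges the $\wb$-drift and squared $\wb$-gradient into a single product $\sum_j\bigl(\sum_{i\in\Omegabar_j} b_{i,j}\bigr)^2$ before applying Cauchy--Schwarz, and in the $L_2$ case it writes out the coordinate recursion $w_{k,j}^t = (1-\lambda\eta_t/\abr{\Omegabar_j})\,w_{k-1,j}^t + (\eta_t/m)\alpha_i x_{ij}$ explicitly to obtain $\abr{w_{k,j}^t - w_j^t}\le c\eta_t(\abr{w_j^t}+v_j)$ with $v_j = m^{-1}\sum_i\abr{x_{ij}}$, which is exactly the step that collapses the $d$-fold sum to $\nbr{\wb^t}^2+\nbr{\vb}^2$.
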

\begin{proof}
For $\wb$,
  \begin{align*}
    \nbr{\wb_{k}^{t} - \wb}^{2}
    & = \nbr{\wb_{k-1}^{t} -\eta_{t} \nabla_{\wb} f_{k} \rbr{\wb_{k-1}^{t}, \alphab_{k-1}^{t}} - \wb}^{2}\\
    & = \nbr{\wb_{k-1}^{t} - \wb}^{2} - 2 \eta_{t} \inner{\nabla_{\wb}
      f_{k}\rbr{\wb_{k-1}^{t}, \alphab_{k-1}^{t}}}{\wb_{k-1}^{t} - \wb}
    + \eta_{t}^{2} \nbr{\nabla_{\wb} f_{k}\rbr{\wb_{k-1}^{t}, \alphab_{k-1}^{t}}}^{2} \\
    & \le \nbr{\wb_{k-1}^{t} - \wb}^{2} - 2 \eta_{t}
    \rbr{f_{k}\rbr{\wb_{k-1}^{t}, \alphab_{k-1}^{t}} - f_{k} \rbr{\wb, \alphab_{k-1}^{t}}}
    + \eta_{t}^{2} \nbr{\nabla_{\wb} f_{k}\rbr{\wb_{k-1}^{t}, \alphab_{k-1}^{t}}}^{2}.
  \end{align*}
  Analogously for $\alphab$ we have
  \begin{align*}
    \nbr{\alphab_{k}^{t} - \alphab}^{2} &\le \nbr{ \alphab_{k-1}^t -
      \alphab}^2 - 2 \eta_{t} \rbr{-f_{k}\rbr{\wb_{k-1}^{t},
        \alphab_{k-1}^{t}} + f_{k}\rbr{\wb_{k-1}^{t}, \alphab}} 
    + \eta_{t}^{2} \nbr{\nabla_{\val} {-f}_{k}\rbr{\wb_{k-1}^{t}, \alphab_{k-1}^{t}}}^{2}. 
  \end{align*}
  Adding the above two inequalities, rearranging
  \begin{align*}
    \nbr{\wb_{k}^{t} - \wb}^{2} + \nbr{\alphab_{k}^{t} - \alphab}^{2} -
    \nbr{\wb_{k-1}^{t} - \wb}^{2} - \nbr{\alphab_{k-1}^{t} -
      \alphab}^{2} \le & - 2 \eta_{t} \rbr{f_{k}\rbr{\wb_{k-1}^{t},
        \alphab} -f_k\rbr{\wb, \alphab_{k-1}^{t}}} \\
    & + \eta_{t}^{2} \nbr{\nabla_{\wb} f_{k}\rbr{\wb_{k-1}^{t}, \alphab_{k-1}^{t}}}^{2} \\ 
    & + \eta_{t}^{2} \nbr{\nabla_{\val} {-f}_{k}\rbr{\wb_{k-1}^{t}, \alphab_{k-1}^{t}}}^{2},  
  \end{align*}
  and summing the above equation for $k=1, \ldots, \abr{\Omega}$ obtains 
  \begin{align}
    \nonumber
    & \nbr{\wb^{t+1} - \wb}^{2} + \nbr{\alphab^{t+1} - \alphab}^{2} -
      \nbr{\wb^t - \wb}^{2} - \nbr{\alphab^{t} - \alphab}^{2} \\
    \le & - 2 \eta_t \rbr{ f(\wb^t, \alphab) - f\rbr{\wb,\alphab^t} }  \nonumber \\
    & \label{eq:key_alpha}
    +2\eta_t\sum_{k}\rbr{ \rbr{-f_{k}\rbr{\wb,\alphab^t}} - \rbr{-f_{k}\rbr{\wb,\alphab^t_{k-1}}}} \\  
    \label{eq:key_alpha_sq}
    &  + \eta_{t}^{2}\sum_{k} \nbr{\nabla_{\val} f_{k}\rbr{\wb_{k-1}^{t}, \alphab_{k-1}^{t}}}^{2}\\
    \label{eq:key_w}
    & +2 \eta_{t} \rbr{\sum_{k} f_{k}\rbr{\wb^t,\alphab}-f_{k}\rbr{\wb^t_{k-1}, \alphab}                               
                              +\half\eta_t \nbr{\nabla_{\wb} f_{k}\rbr{\wb_{k-1}^{t}, \alphab_{k-1}^{t}}}^{2}
                              }.
  \end{align}
  In the following, we derive upper bounds for each term of \eqref{eq:key_alpha}, \eqref{eq:key_alpha_sq} and \eqref{eq:key_w}.
  First observe that 

\begin{align}
    \rbr{{-f}_{k}\rbr{\wb,\alphab^t}} - \rbr{{-f}_{k}\rbr{\wb, \alphab^t_{k-1}} } 
    &\le \inner{\nabla_{\val} {-f}_k(\wb,\alphab^t) }{ \alphab^t_{k-1} -\alphab^t } \\
    &= \inner{\nabla_{\val} {-f}_k(\wb,\alphab^t) }{ \sum_{l:l<k-1} \eta_t\nabla_{\val} {-f}_l(\wb^t_{l-1},\alphab^t_{l-1}) } \\
    &\le \nbr{\nabla_{\val} {-f}_k(\wb,\alphab^t) } \nbr{ \sum_{l:l<k-1,i_l=i_k} \eta_t\nabla_{\val} {-f}_l(\wb^t_{l-1},\alphab^t_{l-1})}.
\end{align}
As we can see 
\begin{align}
  \label{eq:upbd_nabla_alpha}
  \nbr{\nabla_{\val} {-f}_k(\wb,\alphab^t) }
  &\le \frac{c}{m\abr{\Omega_{i_k}}} + \frac{\abr{w^t_{k,j_k} x_{i_k,j_k}}}{m} \\
  \label{eq:upbd_nabla_alpha2}
  &\le \frac{c}{m}\rbr{\frac{1}{\abr{\Omega_{i_k}}} + \abr{x_{i_k,j_k}}}, 
\end{align}
using \eqref{eq:upbd_nabla_alpha}, we have
\begin{align}
  \sum_k \nbr{\nabla_{\val} {-f}_k(\wb,\alphab^t) }
  \le  c + \frac{1}{m}\sum_{i=1}^m \sum_{j=1}^d\abr{w_j x_{ij}} 
  \le  c + \frac{1}{m}\sum_{i=1}^m \nbr{\wvec} \nbr{\xvec_i}
  \le  2 c^2 .
\end{align}
On the other hand, using \eqref{eq:upbd_nabla_alpha2}, we have
\begin{align}
\nbr{ \sum_{l:l<k-1,i_l=i_k} \eta_t\nabla_{\val} {-f}_l(\wb^t_{l-1},\alphab^t_{l-1}) }
  &\le \frac{c\eta_t}{m} \rbr{1 + \sum_{j\in \Omega_i} \abr{x_{ij}}} \\ 
  &\le \frac{c\eta_t}{m} \rbr{1 + \sqrt{\abr{\Omega_i}} \nbr{\xvec_i}} \quad \text{(Cauchy-Schwartz)}\\
  &\le \frac{c^2\eta_t}{m} \rbr{1 + \sqrt{\maxOmegai} }. 
\end{align}
Therefore, we can conclude
\begin{align}
  \sum_k \rbr{{-f}_{k}\rbr{\wb,\alphab^t}} - \rbr{{-f}_{k}\rbr{\wb, \alphab^t_{k-1}} }
  &= \sum_k \nbr{\nabla_{\val} {-f}_k(\wb,\alphab^t) } \nbr{ \sum_{l:l<k-1,i_l=i_k} \eta_t\nabla_{\val} {-f}_l(\wb^t_{l-1},\alphab^t_{l-1}) } \\
  & \le \sum_k \nbr{\nabla_{\val} {-f}_k(\wb,\alphab^t) } \cdot \frac{c^2\eta_t}{m} \rbr{1 + \sqrt{\maxOmegai} } \\
  &\le \frac{2c^4\eta_t^2\sqrt{\maxOmegai}}{m}. 
\end{align}
Also for the term \eqref{eq:key_alpha_sq},
\begin{align}
  \sum_k \nbr{\nabla_{\val} {-f}_k(\wb^t_{k-1},\alphab^t_{k-1}) }^2
  &\le \sum_k \frac{c^2}{m^2}\rbr{\frac{1}{\abr{\Omega_{i_k}}} + \abr{x_{i_k,j_k}}}^2 \\
  &\le \frac{2c^2}{m^2} \sum_k \rbr{\frac{1}{\abr{\Omega_{i_k}}^2} + \abr{x_{i_k,j_k}}^2  } \\
  &\le \frac{2c^2}{m^2} \sum_i \rbr{\frac{1}{\abr{\Omega_i}} + \nbr{\xb_{i}}^2  } \\
  &\le \frac{4c^4}{m}.
\end{align}

Similarly for \eqref{eq:key_w},
  \begin{align*}
    f_{k}\rbr{\wb^{t}, \alphab} - f_{k}\rbr{\wb_{k-1}^{t}, \alphab} &
    \le \inner{\nabla_{\wb} f_{k}\rbr{\wb^{t}, \alphab}}{\wb_{k-1}^{t} - \wb^{t}} \\
  &= \inner{\nabla_{\wb} f_k (\wb^t, \alphab ) }{\eta_t \sum_{l:l<k-1}\nabla_{\wb} f_l (\wb^t_l, \alphab^t_l)} \\
  &\le \eta_t \sum_{l:l<k-1, j_l = j_k } \abr{ \frac{\lambda \nabla \phi(w^t_{j_k})}{\abs{\Omegabar_j}} - \frac{\alpha_{i_k} x_{i_k,j_k}}{m} }  
         \abr{\frac{\lambda \nabla \phi(w^t_{l, j_l})}{\abs{\Omegabar_j}} - \frac{\alpha_{i_l} x_{i_l,j_l}}{m} }  \\
  &\le c^2 \eta_t \sum_{l:l<k-1, j_l = j_k } \abr{ \frac{\lambda}{\abs{\Omegabar_j}} + \frac{\abs{ x_{i_k,j_k}}}{m} }  
         \abr{\frac{\lambda}{\abs{\Omegabar_j}} + \frac{\abs{x_{i_l,j_l}}}{m} },
  \end{align*}
  which leads to 
  \begin{align}
    & f_{k} \rbr{\wb^t, \alphab} - f_{k}\rbr{\wb^t_{k-1},\alphab} + \half \eta_t  \nbr{\nabla_{\wb} f_{k}\rbr{\wb_{k-1}^{t}, \alphab_{k-1}^{t}}}^{2} \\
    &\le c^2 \eta_t \sum_{l:l<k-1, j_l = j_k } \abr{ \frac{\lambda}{\abs{\Omegabar_j}} + \frac{\abs{ x_{i_k,j_k}}}{m} }  
         \abr{\frac{\lambda}{\abs{\Omegabar_j}} + \frac{\abs{x_{i_l,j_l}}}{m} }
         +  c^2 \eta_t  \abr{ \frac{\lambda}{\abs{\Omegabar_j}} + \frac{\abs{ x_{i_k,j_k}}}{m} }  
         \abr{\frac{\lambda}{\abs{\Omegabar_j}} + \frac{\abs{x_{i_k,j_k}}}{m} } \\
    &\le c^2 \eta_t \sum_{i \in \Omegabar_{j_k} } \abr{ \frac{\lambda}{\abs{\Omegabar_j}} + \frac{\abs{ x_{i_k,j_k}}}{m} }  
         \abr{\frac{\lambda}{\abs{\Omegabar_j}} + \frac{\abs{x_{i,j_k}}}{m} }.
  \end{align}
  
Thus we can get the bound for \eqref{eq:key_w} as follows, 
\begin{align}
  & \sum_k f_{k}\rbr{\wb^t, \alphab} - f_{k}\rbr{\wb^t_{k-1},\alphab} + \half \eta_t  \nbr{\nabla_{\wb} f_{k}\rbr{\wb_{k-1}^{t}, \alphab_{k-1}^{t}}}^{2} \\
  &\le\sum_k c^2 \eta_t \sum_{i \in \Omegabar_{j_k} }
      \abr{ \frac{\lambda}{\abs{\Omegabar_j}} + \frac{\abs{ x_{i_k,j_k}}}{m} }  
      \abr{\frac{\lambda}{\abs{\Omegabar_j}} + \frac{\abs{x_{i,j_k}}}{m} }\\
  &= c^2 \eta_t \sum_j \sum_{i' \in \Omegabar_{j}} \sum_{i \in \Omegabar_{j} }
      \abr{ \frac{\lambda}{\abs{\Omegabar_j}} + \frac{\abs{ x_{i',j}}}{m} }  
      \abr{\frac{\lambda}{\abs{\Omegabar_j}} + \frac{\abs{x_{i,j}}}{m} }\\
  &= c^2 \eta_t \sum_j
      \rbr{\sum_{i' \in \Omegabar_{j}}  \abr{ \frac{\lambda}{\abs{\Omegabar_j}} + \frac{\abs{ x_{i',j}}}{m} } }
      \rbr{\sum_{i \in \Omegabar_{j}}  \abr{ \frac{\lambda}{\abs{\Omegabar_j}} + \frac{\abs{ x_{i,j}}}{m} } } \\
  &\le c^2 \eta_t \sum_j
      \abr{\Omegabar_{j}} \sum_{i \in \Omegabar_{j}}  \abr{ \frac{\lambda}{\abs{\Omegabar_j}} + \frac{\abs{ x_{i,j}}}{m} }^2 \\
  &\le 2c^2 \eta_t \sum_j
       \sum_{i \in \Omegabar_{j}} \rbr{  \frac{\lambda^2}{\abs{\Omegabar_j}} + \frac{\abs{\Omegabar_j}\abs{x_{i,j}}^2}{m^2} }\\
  &\le  2c^2 \eta_t \sum_{j} \lambda^2 + 2c^2 \eta_t  \sum_i \sum_{j \in \Omega_i} \frac{\abs{x_{i,j}}^2}{m} \\
  &\le  2c^4 d \eta_t + 2c^4 \eta_t .
\end{align}

Incorporating this bound into \eqref{eq:key_alpha}, \eqref{eq:key_alpha_sq} and \eqref{eq:key_w}, we get
\begin{align}
    &\nbr{\wb^{t+1} - \wb}^{2} + \nbr{\alphab^{t+1} - \alphab}^{2} -
     \nbr{\wb^{t} - \wb}^{2}  -  \nbr{\alphab^{t} - \alphab}^{2}    
    \le -2 \eta_t \rbr{ f\rbr{\wb^{t}, \alphab} - f\rbr{\wb, \alphab^{t}} } \nonumber \\
    \label{eq:factor_eta_t_sq}
    & + 2\eta_t \cdot \rbr{2c^4 d\eta_t + 2c^4 \eta_t }
      + \frac{2c^4\eta_t^2\sqrt{\maxOmegai}}{m}
      + \frac{4c^4}{m} \cdot \eta_t^2.
\end{align}
Thus we can get such a sufficiently large constant $C$
that $C d \eta_t^2$ upper bounds the term of \eqref{eq:factor_eta_t_sq}.

In the case of $\phi_j(s) = \half s^2$, we assume the following additionally $\eta_t < \frac{1}{\lambda}$ and $\sqrt{ \maxOmegai} < m$.
As we can see from the expression \eqref{eq:factor_eta_t_sq}, assuming $\sqrt{ \maxOmegai} < m$,
it is sufficient to get the bound for the term \eqref{eq:key_w} which is independent of $d$.


From now on we just write as $i,j$ for $i_k,j_k$ if it is clear from context.
The key tool we use is the following bound on $w^t_{k-1,j}$.
Since $\wvec$ is updated by incremental gradient descent via 
\begin{align}
  w^t_{k,j} = w^t_{k-1,j} - \eta_t \rbr{\frac{\lambda}{\abr{\Omegabar_j}} w^t_{k-1,j} - \frac{1}{m} \alpha^t_{k-1,i} x_{ij}} = \rbr{1 - \frac{\lambda}{\abr{\Omegabar_j}} \eta_t} w^t_{k-1,j} + \frac{1}{m} \eta_t \alpha^t_{k-1,i} x_{ij}.
\end{align}
Using $\lambda \eta_t < 1$, we derive
\begin{align}
  &\abr{w^t_{k,j} - w^t_j} \le \rbr{1-\rbr{1-\frac{\lambda}{\abr{\Omegabar_j}} \eta_t}^{\abr{\Omegabar_j}}} \abr{w^t_j} + c \eta_t v_j, \\
  \text{where} \quad  &v_j := \frac{1}{m} \sum_{i =1}^m \abr{x_{ij}}, \quad \vvec := (v_1, \ldots, v_d)'.
\end{align}
The geometric term can be bounded by
\begin{align}
  1-\rbr{1-\frac{\lambda}{\abr{\Omegabar_j}} \eta_t}^{\abr{\Omegabar_j}} = \lambda \eta_t - \sum_{k=2}^{\abr{\Omegabar_j}} {{\abr{\Omegabar_j}}\choose{k}} \rbr{\frac{-\lambda}{\abr{\Omegabar_j}}\eta_t}^k \le \lambda \eta_t + (\lambda \eta_t)^2 \sum_{k=2}^{\abr{\Omegabar_j}} \frac{1}{k!} \le c \lambda \eta_t.
\end{align}
Therefore we conclude
\begin{align}
\label{eq:bound_wt}
  \abr{w^t_{k,j} - w^t_j} \le c \eta_t (\abr{w^t_j} + v_j), \qquad \text{and} \qquad \abr{w^t_{k,j}} \le c\abr{w^t_j} + c \eta_t v_j.
\end{align}

Now we can get
\begin{align}
    \sum_k \nbr{\nabla_{\wb} f_{k}\rbr{\wb_{k-1}^{t}, \alphab_{k-1}^{t}}}^{2} 
    &\le \sum_k 2\abr{\frac{\alpha_i}{m}x_{ij}}^2 + 2\abr{\frac{\lambda}{\abr{\Omegabar_j}} w^t_{k-1,j}}^2\\
    \label{eq:bound_w}
    &\le \frac{2c^3}{m^2} \sum_k x^2_{ij} + \sum_{(i,j) \in \Omega} \frac{2\lambda^2}{\abs{\Omegabar_j}^2} \rbr{c\abr{w^t_{j}} + c \eta_t v_j}^2 \\
    &\le \frac{2c^3}{m} + \sum_{j } \frac{2c^2}{\abs{\Omegabar_j}} \rbr{c^2 (w^t_j)^2 + c^2 \eta_t^2 v_j^2} \\
    &\le \frac{2c^3}{m} + 2c^2 ( c^2 \nbr{\wvec^t}^2 + c^2 \eta_t^2  \nbr{\vvec}^2) \le c_1,    
  \end{align}
with a constant $c_1$.
Also for the rest of the terms in \eqref{eq:key_w}, we can see
\begin{align}
  f_{k}\rbr{\wb^t,\alphab} - f_{k}\rbr{\wb^t_{k-1}, \alphab} 
  &=
  \frac{\lambda}{2 \abr{\Omegabar_j}} (w^t_{j})^2 - \frac{\lambda}{2\abr{\Omegabar_j}} (w^t_{k-1,j})^2 -\frac{\alpha_i}{m} w^t_{j} x_{ij} + \frac{\alpha_i}{m} w^t_{k-1,j} x_{ij} \\
  &\le \abr{w^t_{k-1,j} - w^t_j} \rbr{\frac{2c^2}{m} \abr{x_{ij}} + \frac{\lambda}{2\abr{\Omegabar_j}} \abr{w^t_{k-1,j} + w^t_j}} \\
  &\le {c \eta_t} (\abr{w^t_j} + v_j) \rbr{\frac{2c^2}{m} \abr{x_{ij}} + \frac{\lambda}{\abr{\Omegabar_j}}(1+c) \abr{w^t_j} + \frac{\lambda c}{\abr{\Omegabar_j}} \eta_t v_j} \quad \text{by }\eqref{eq:bound_wt}.
\end{align}

Therefore
\begin{align}
  \sum_k f_{k}\rbr{\wb^t,\alphab} - f_{k}\rbr{\wb^t_{k-1}, \alphab} 
  &\le {2c^3 \eta_t} \sum_{(i,j) \in \Omega} (\abr{w^t_j} + v_j) \rbr{\frac{1}{m} \abr{x_{ij}} + \frac{1}{\abr{\Omegabar_j}} \abr{w^t_j} + \frac{1}{\abr{\Omegabar_j}} \eta_t v_j} \\
  &\le {2c^3 \eta_t} \sum_{j} (\abr{w^t_j} + v_j) \rbr{\frac{1}{m}\sum_{i=1}^m \abr{x_{ij}} + \abr{w^t_j} + \eta_t v_j} \\
  &\le {2c^3 \eta_t} (\nbr{\wvec^t}+\nbr{\vvec}) \rbr{\frac{1}{m} \sum_{i=1}^m \nbr{\xvec_i} + \nbr{\wvec^t} + \eta_t \nbr{\vvec} } \\
  &\le c_2 \eta_t,
\end{align}
with a constant $c_2$.
In the end, we can see, instead of \eqref{eq:factor_eta_t_sq}, we get
\begin{align}
    &\nbr{\wb^{t+1} - \wb}^{2} + \nbr{\alphab^{t+1} - \alphab}^{2} -
     \nbr{\wb^{t} - \wb}^{2}  -  \nbr{\alphab^{t} - \alphab}^{2}    
    \le -2 \eta_t \rbr{ f\rbr{\wb^{t}, \alphab} - f\rbr{\wb, \alphab^{t}} } \nonumber \\
    \label{eq:factor_eta_t_sq2}
    & + 2\eta_t \cdot c_2 \eta_t + c_1\eta_t^2
      + \frac{c^4\eta_t^2\sqrt{\maxOmegai}}{m}
      + \frac{4c^4}{m} \cdot \eta_t^2.
\end{align}
Thus we can get such a sufficiently large constant $C$
that $C \eta_t^2$ upper bounds the term of \eqref{eq:factor_eta_t_sq2}.
\end{proof}

\end{document}